\newtheorem{theorem}{Theorem}
\newcommand{\BoundedRegion}{\mathcal{B}}
\newcommand{\Intensity}{\lambda}
\newcommand{\Dim}{d}
\newcommand{\SetSize}{n}
\newcommand{\Set}{X}
\newcommand{\SetElement}{\vx}
\newcommand{\ZSet}{Z}
\newcommand{\Poisson}{\mathrm{Poisson}}
\def\Figref#1{Figure~\ref{#1}}
\def\Secref#1{Section~\ref{#1}}
\def\eqref#1{Eq.~\ref{#1}}
\def\Eqref#1{Equation~\ref{#1}}
\def\1{\bm{1}}
\def\vg{{\bm{g}}}
\def\vh{{\bm{h}}}
\def\vv{{\bm{v}}}
\def\vx{{\bm{x}}}
\def\vz{{\bm{z}}}
\def\mI{{\bm{I}}}
\DeclareMathAlphabet{\mathsfit}{\encodingdefault}{\sfdefault}{m}{sl}
\SetMathAlphabet{\mathsfit}{bold}{\encodingdefault}{\sfdefault}{bx}{n}
\newcommand{\E}{\mathbb{E}}
\newcommand{\R}{\mathbb{R}}
\newcommand{\softmax}{\mathrm{softmax}}
\newcommand{\KL}{D_{\mathrm{KL}}}
\DeclareMathOperator{\Tr}{Tr}
\icmltitlerunning{Scalable Normalizing Flows for Permutation Invariant Densities}
\begin{document}

\twocolumn[
\icmltitle{Scalable Normalizing Flows for Permutation Invariant Densities}

\begin{icmlauthorlist}
\icmlauthor{Marin Biloš}{tum}
\icmlauthor{Stephan Günnemann}{tum}
\end{icmlauthorlist}

\icmlaffiliation{tum}{Technical University of Munich, Germany}

\icmlcorrespondingauthor{}{bilos@in.tum.de}

\icmlkeywords{normalizing flows,permutation invariance,permutation equivariance,continuous normalizing flows,neural ode}

\vskip 0.3in
]

\printAffiliationsAndNotice{}

\begin{abstract}
    Modeling sets is an important problem in machine learning since this type of data can be found in many domains. A promising approach defines a family of permutation invariant densities with continuous normalizing flows. This allows us to maximize the likelihood directly and sample new realizations with ease. In this work, we demonstrate how calculating the trace, a crucial step in this method, raises issues that occur both during training and inference, limiting its practicality. We propose an alternative way of defining permutation equivariant transformations that give closed form trace. This leads not only to improvements while training, but also to better final performance. We demonstrate the benefits of our approach on point processes and general set modeling.
\end{abstract}

\section{Introduction}\label{sec:introduction}

We say that data is exchangeable if it lacks ordering, i.e., any permutation of random variables has the same probability. Examples include point clouds, items in a shopping cart, tracking household electricity consumption in a city etc. Often, the number of elements is also a random variable, like in locations of cellular stations or trees in a forest. We wish to find a generative model that best describes such data. To do that, we need to specify a symmetric density --- one that is invariant to data permutations.

A powerful framework to model densities are normalizing flows \cite{kobyzev2020normalizing}. They parametrize a transformation from a simple base distribution into a complex one. To define a permutation invariant density, it is enough to have a permutation invariant base distribution, and a permutation equivariant transformation \cite{papamakarios2019normalizing}. The latter is impossible to obtain with most of the traditional architectures \cite{dinh2016density,kingma2016improved}, unless we assume the points are independent of each other.

However, the i.i.d.\ assumption is a strong limitation because the presence of one object can influence the distribution of the others. For example, short trees grow near each other, but are inhibited by taller trees \citep{ogata1985estimation}. We want to find a way to model invariant densities that capture this behavior. A special class of flows, called continuous normalizing flows \cite{chen2018neural}, support unrestricted transformations that allow us to directly use permutation equivariant layers \cite{zaheer2017deep,lee2019set}. The general approach is illustrated in \Figref{fig:example}.

\begin{figure}
    \centering
    \begin{tikzpicture}
        \node[inner sep=0pt] at (0, 0)
        {\includegraphics[height=2.5cm]{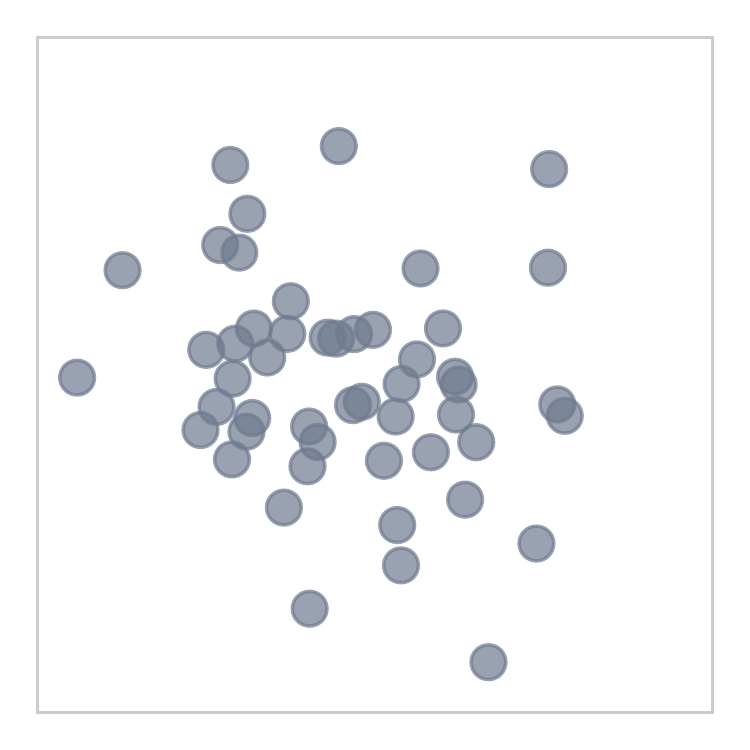}};

        \node[inner sep=0pt] at (1.8, 0)
        {\includegraphics[height=2.5cm]{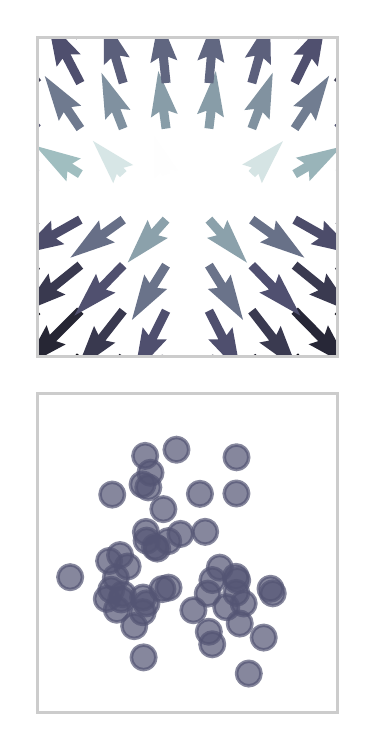}};

        \node[inner sep=0pt] at (3, 0)
        {\includegraphics[height=2.5cm]{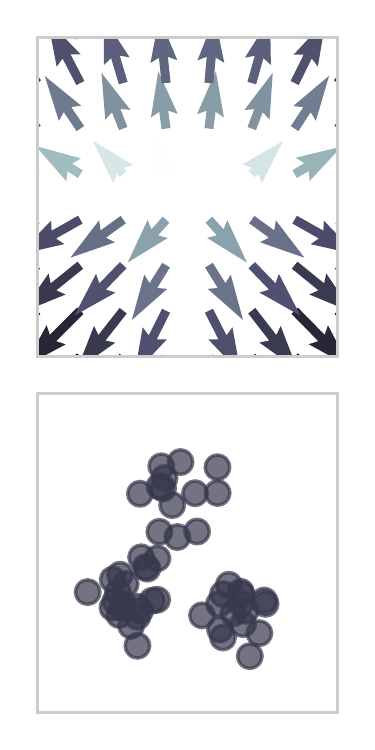}};

        \node[inner sep=0pt] at (4.8, 0)
        {\includegraphics[height=2.5cm]{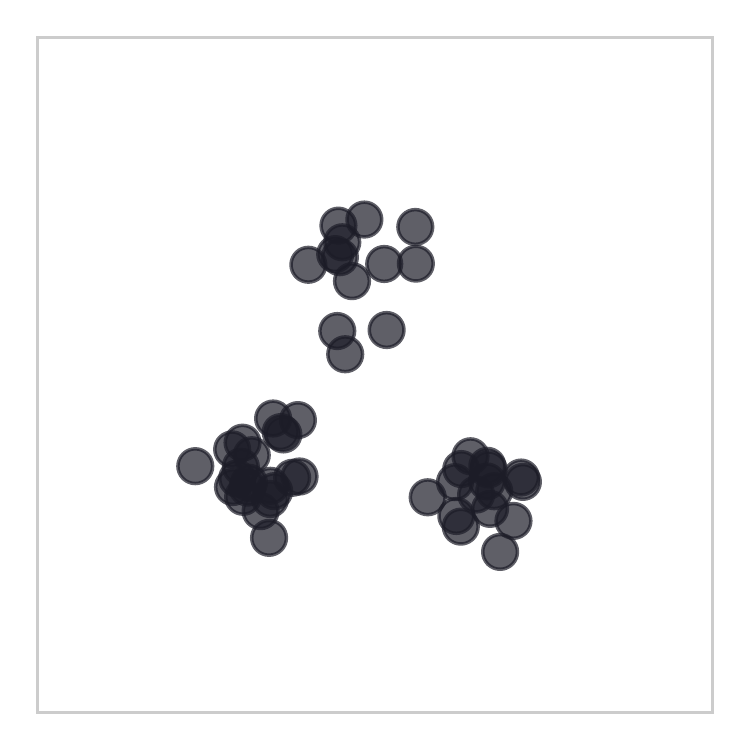}};

        \node at (0, -1.8) {\small Base distribution};
        \node at (4.8, -1.8) {\small Target distribution};
        \node at (2.4, -1.8) {\small $f$};

        \node[outer sep=0.2cm] (t0) at (0.0, -1.4) {\small $t_0$};
        \node[outer sep=0.2cm] (t1) at (4.9, -1.4) {\small $t_1$};

        \draw[triangle 45-triangle 45,thick,black!50] (t0) -- (t1);
        \draw [thick,black!50] (1.8, -1.5) -- (1.8, -1.3);
        \draw [thick,black!50] (3, -1.5) -- (3, -1.3);
    \end{tikzpicture}
    \vspace*{-0.3cm}
    \caption{Illustration of our approach. Transforming random points from the permutation invariant base distribution with an ODE gives a realization in the target distribution. The opposite direction calculates the likelihood of the observed sample. The dynamics $f$ has to be a permutation equivariant neural network.}
    \label{fig:example}
    \vspace*{-0.5cm}
\end{figure}

The main disadvantage of continuous flows is the requirement to calculate the trace of the Jacobian of the transformation. In practice, this is done using an unbiased estimator. In this work, we show that this introduces additional noise during training, making it unstable. Further, for inference and in some downstream tasks (e.g., out-of-distribution detection), we would still have to use costly exact trace computation.

To combat this, we propose a straightforward decoupling of permutation equivariant functions to obtain a constant trace, i.e., we construct a volume preserving flow. We further extend this to have exact trace computation while keeping the original expressiveness. In the experiments we show that our method speeds-up the training, makes it more stable, provides better results, and scales with big datasets.

Additionally, we introduce novel models for point processes that account for complex interactions between the points and have closed form likelihood with easy sampling. Previous work resorted to less expressive models or using pseudolikelihood \cite{besag1975statistical}. We tackle other use cases as well, such as modeling point clouds.

\section{Background}\label{sec:background}

The realizations of a finite point process on a bounded region $\BoundedRegion \subset \R^\Dim$ are finite sets of points $\Set = \{\SetElement_1, \dots, \SetElement_\SetSize\}$, $\SetElement_i \in \BoundedRegion$. To fully specify a point process we need two distributions: the distribution of the number of points $p(\SetSize)$, and the permutation invariant probability density $\tilde{p}(\SetElement_1, \dots, \SetElement_\SetSize)$ of their locations. The probability of seeing $n$ points at locations $\Set$ is \cite{daleyintroduction}:
\begin{align}\label{eq:point_process_likelihood}
    p(\Set) = \SetSize! p(\SetSize) \tilde{p}(\SetElement_1, \dots, \SetElement_\SetSize) .
\end{align}
Traditionally, point processes are defined with an intensity function $\Intensity(\cdot)$ that measures how many points we expect to see on some subset. This is equivalent to what we described in \eqref{eq:point_process_likelihood}, therefore, we can safely use our parametrization without losing generality.

Perhaps the simplest non-trivial point process model is an inhomogeneous Poisson process. The number of points $\SetSize$ follows a Poisson distribution and the locations of the points are assumed to be generated i.i.d.\ from some density $\tilde{p}(\Set) = \prod_i p(\SetElement_i)$ \citep{chiu2013stochastic}. To generate a new realization, we first sample $\SetSize \sim \Poisson(\Intensity)$, and then sample $\SetSize$ points $\SetElement \sim p(\SetElement)$. However, IHP alone cannot capture more complex permutation invariant densities, with interactions between the points.

A solution candidate are normalizing flows since they define complex distributions with invertible smooth transformations of the initial random variable. That means, if we apply a function $f : \BoundedRegion^\Dim \rightarrow \BoundedRegion^\Dim$ to a random variable $\vz \sim q(\vz)$, where $f$ is invertible and differentiable, we can get the density of $\vx = f(\vz)$ by calculating the change of variables formula: $p(\vx) = q(\vz) \left| \det J_f(\vz) \right |^{-1}$, where $J$ is the Jacobian of $f$. The main challenge is defining a function that can be efficiently inverted and whose determinant is not prohibitively expensive to calculate. Thus, most approaches design transformations that have a triangular Jacobian, using coupling \cite{dinh2016density} or autoregressive transformations \cite{kingma2016improved}. This, however, permits us from using permutation equivariant functions since having rich interactions between the elements requires a \textit{full} Jacobian.

Another way to transform an initial sample $\vz$ is with an ordinary differential equation $f(\vz(t), t) = \partial \vz(t) / \partial t$. The instantaneous change in log-density is \citep{chen2018neural}:
\begin{align}\label{eq:instant_change_of_variables}
    \frac{\partial  \log p(\vz(t))}{\partial t} = -\Tr\left( \frac{\partial f}{\partial \vz(t)} \right),
\end{align}
and the final log-density is obtained by integrating across time, e.g., using a black-box ODE solver:
\begin{align}\label{eq:instant_change_of_variables_density}
    \log p(\vx) = \log q(\vz(t_0)) - \int_{t_0}^{t_1} \Tr\left( \frac{\partial f}{\partial \vz(t)} \right) dt .
\end{align}
It is easy to show that if $f$ is permutation equivariant and the base distribution $q$ is permutation invariant, the final density $p$ is also permutation invariant (see Supplementary Material). The only other constraint is that $f$, along with its derivatives,  needs to be Lipschitz continuous which is satisfied by using smooth activations in a neural network. The benefit of this approach is that we are restricting $f$ only mildly, meaning the dimensions can interact with each other arbitrarily. We utilize this to model interactions between the points in a point process.

\section{Model}\label{sec:model}

A general way to define an equivariant map $f : \BoundedRegion^\SetSize \rightarrow \BoundedRegion^\SetSize$ is to define an invariant function $g : \BoundedRegion^\SetSize \rightarrow \R$, and let $f(\Set) = \nabla_\Set g(\Set)$ \citep{papamakarios2019normalizing}. However, this is computationally costly and numerically unstable \citep{kohler2020equivariant}. We can instead construct an equivariant (deep set) layer directly \citep{zaheer2017deep,maron2020learning}:
\begin{align}\label{eq:deepset}
    f(\Set)_i = g(\SetElement_i) + \sum_{j \ne i} h(\SetElement_j),
\end{align}
where $g, h : \BoundedRegion \rightarrow \BoundedRegion$ are neural networks.

An alternative equivariant mapping is a multihead self-attention \citep{vaswani2017attention}. In general, the attention layer is defined for matrices $Q \in \R^{\SetSize \times d_k}, K \in \R^{\SetSize \times d_k}$ and $V \in \R^{\SetSize \times d_v}$ as follows:
\begin{align}\label{eq:attention-general}
    \mathrm{Attention}(Q, K, V) = \softmax \left( \frac{Q K^T}{\sqrt{d_k}} \right) V.
\end{align}
Our transformation is then:
\begin{align}\label{eq:attention}
    f(\Set)_i = \mathrm{Attention}(f_q(\Set), f_k(\Set), f_v(\Set))_i
\end{align}
where we treat $\Set$ as a stacked matrix of $\SetElement_i$ set elements. Functions $f_q, f_k$ and $f_v$ are permutation invariant neural networks, i.e., transform each point independently.

\subsection{Efficient trace calculation}

Computing the trace in \eqref{eq:instant_change_of_variables} during training is expensive since it scales quadratically $\mathcal{O}(\SetSize^2 \Dim^2)$ with the number of points and the dimension of $\BoundedRegion$. \citet{grathwohl2018ffjord} propose using Hutchinson's trace estimator which reduces the cost to $\mathcal{O}(\SetSize \Dim)$. However, this introduces new problems: 1) the estimator brings noise into training making it unstable; and 2) during inference, we still need to evaluate the exact trace to get the correct density. In the following we introduce a way to have closed form trace in linear time $\mathcal{O}(\SetSize \Dim)$, while retaining the expressiveness of layers in \eqref{eq:deepset} and \ref{eq:attention}. We do that by decoupling the original transformation into different nested parts, that give us exact trace computation (\Figref{fig:jacobian}).

\textbf{Interactions between points.} If we take a closer look at  \eqref{eq:deepset}, we can see that it constitutes of two decoupled functions, one is acting on the current element: $g(\SetElement_i)$, and the second corresponds to the contribution from all the other elements: $\sum_{j \ne i} h(\SetElement_j)$. If we ignore the first part, the trace will be always equal to $0$, meaning we do not have to evaluate \eqref{eq:instant_change_of_variables} since it is constant. This gives us a volume-preserving normalizing flow, similar to \cite{dinh2014nice,rezende2019equivariant}. Here, every point $\SetElement_i$ is transformed based on all the other points $\SetElement_{j \ne i}$. This is illustrated in \Figref{fig:jacobian} as a \textit{between points} transformation.

In practice, we efficiently implement $\vh_i = \sum_{j \ne i} h(\SetElement_j)$ for the $i$th element by calculating the sum of all $h(\SetElement_j)$ and then subtract $h(\SetElement_i)$. If instead of the sum we use mean aggregation, we divide everything by $\SetSize - 1$. We get a constant trace for the operation of finding the maximum value by returning the largest value for all elements that are not the actual maximum, i.e., with the usual max-pooling. At the true maximum we return the second largest value.

\textbf{Interactions within point.} The next question is if we can keep the function $g$ inside \eqref{eq:deepset} while retaining the fixed trace property. To do so, we apply the same trick as before --- separate the part that produces the constant trace from the rest. This is visualized in \Figref{fig:jacobian} as a \textit{within point} transformation. Each dimension in one element $\SetElement_i$ should be transformed using other dimensions but should not depend on itself. Since $g$ does not cary any invariance constraints, we can use arbitrary neural network layers. We implement the transformation $g$ by masking the weights, in particular, we use the existing MADE architecture \cite{germain2015made}. Such masked networks ensure that there is no computation path from the input dimension $\vx_{ij}$ to the output $g(\vx_i)_j$. The resulting Jacobian of $g$ always has zero values on the diagonal. The output dimension can be larger then the input, with groups of dimensions that are independent of $\vx_{ij}$. This allows us to build multilayer networks and keeps the constant trace requirement fulfilled.

\textbf{Per dimension transformation.} So far we got a transformation with a Jacobian matrix that has zeros on the diagonal --- a volume preserving flow. Finally, to get the full Jacobian, the last remaining part is the \textit{per dimension} transformation. In the simplest case, this is an $\R \rightarrow \R$ mapping, and can be easily calculated in closed form. A more expressive function, one that recovers $g$ fully, is $g(\vx_{ij}) = \tau(\vx_{ij}, \vg_i)$, where $\tau : \R^{d_g + 1} \rightarrow \R$ and $\vg_i$ does not depend on $\vx_{ij}$, i.e., it is a result of masked layers from above. This kind of decoupling was first introduced by \citet{chen2019neural}. By exploiting the properties of modern deep learning frameworks, the trace can now be calculated cheaply. We can extend this idea by also including the result of the between point interactions $\vh_i$.

The complete transformation of a single dimension of one set element $\SetElement_i$ is $f(\vx_{ij}) = \tau(\vx_{ij}, \vg_i, \vh_i)$. The function $\tau$ transforms the $j$th dimension of the $i$th element, conditioned on the within point interactions $\vg_i$, and the between points interactions $\vh_i$. To capture the original function $g$ exactly, the vector $\vg_i$ has to have a dimension of $\Dim - 1$; however, in practice, smaller dimensions can be used with good results and better efficiency. Function $\tau$ can be any neural network. We can detach the zero trace networks that produce $\vg_i$ and $\vh_i$ completely from the computation graph and just calculate $\partial \tau_{ij} / \partial x_{ij}$ in parallel to obtain the exact trace. When backpropagating we need to attach the networks back to allow learning of the parameters. More details can be found in \citet{chen2019neural}.

Alternatively, we can keep the original function $g$, and use stochastic trace estimation. That is, we only use the constant trace for between points interaction, while $g$ is unrestricted. The Jacobian produced by $g$ is a block diagonal matrix (see \Figref{fig:jacobian}). Therefore, we can estimate the trace as a sum of traces in these $\Dim \times \Dim$ blocks. This results in the reduced variance of the estimator. \citet{chen2020neural} use a similar approach for the attention-based models, in which they detach the gradient connections between certain elements. We can even resort to the exact calculation with the $\mathcal{O}(\SetSize \Dim^2)$ cost, which can be acceptable for $d \ll n$.

\begin{figure}
    \centering
    \begin{tikzpicture}
        \node[inner sep=0pt] at (-1.3, 0.5) {\small $\SetElement_1 \Bigg\{$};
        \node[inner sep=0pt] at (-1.3, -0.5) {\small $\SetElement_2 \Bigg\{$};
        \node[inner sep=0pt] at (-1.3, -2) {\small $J_f$};

        \node[inner sep=0pt] at (0, 0) {\includegraphics[height=1.8cm]{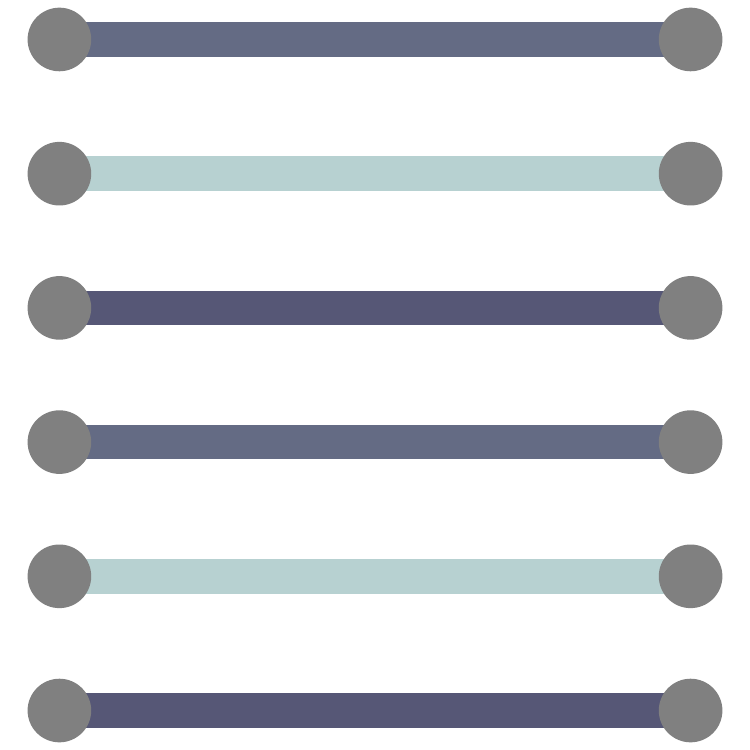}};
        \node[inner sep=0pt] at (0, -2) {\includegraphics[height=1.8cm]{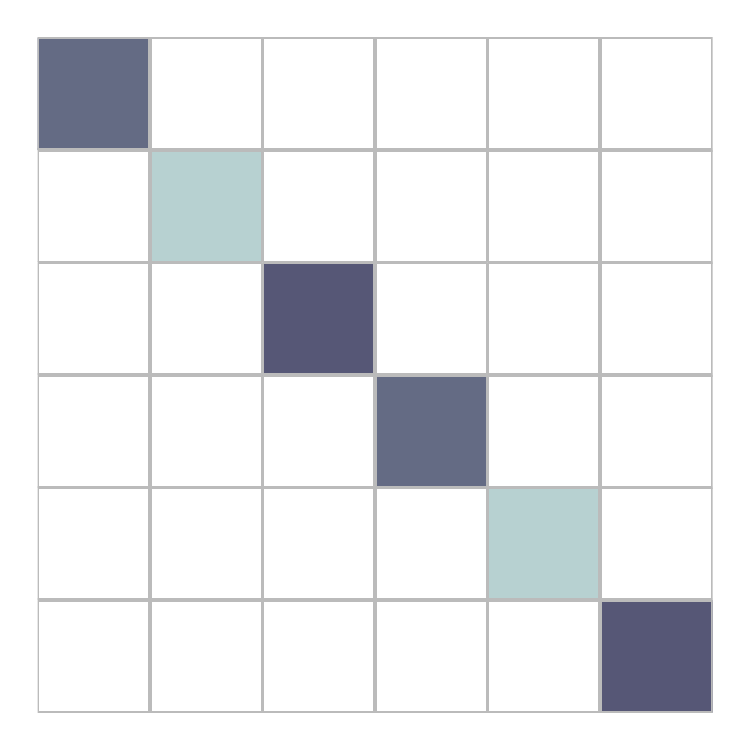}};
        \node[inner sep=0pt] at (0, 1.24) {\scriptsize Per dimension};

        \begin{scope}[xshift=1.9cm,on grid]
        \node[inner sep=0pt] at (0, 0) {\includegraphics[height=1.8cm]{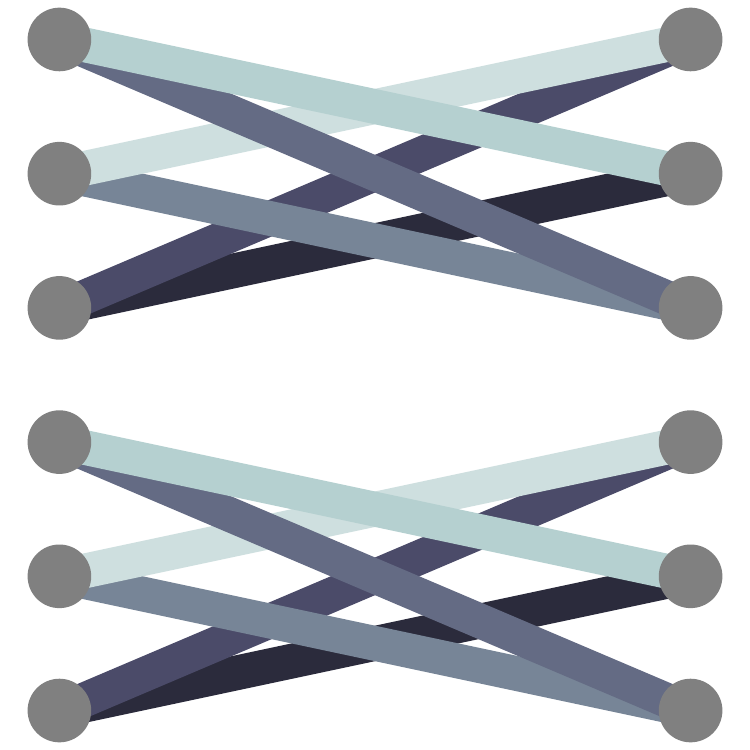}};
        \node[inner sep=0pt] at (0, -2) {\includegraphics[height=1.8cm]{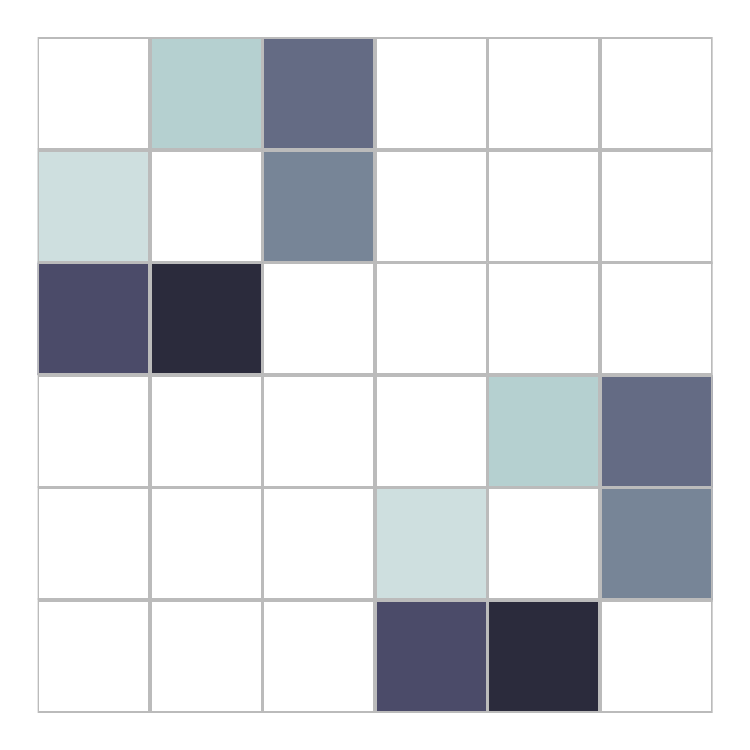}};
        \node[inner sep=0pt] at (0, 1.2) {\scriptsize Within point};
        \end{scope}

        \begin{scope}[xshift=3.8cm,on grid]
        \node[inner sep=0pt] at (0, 0) {\includegraphics[height=1.8cm]{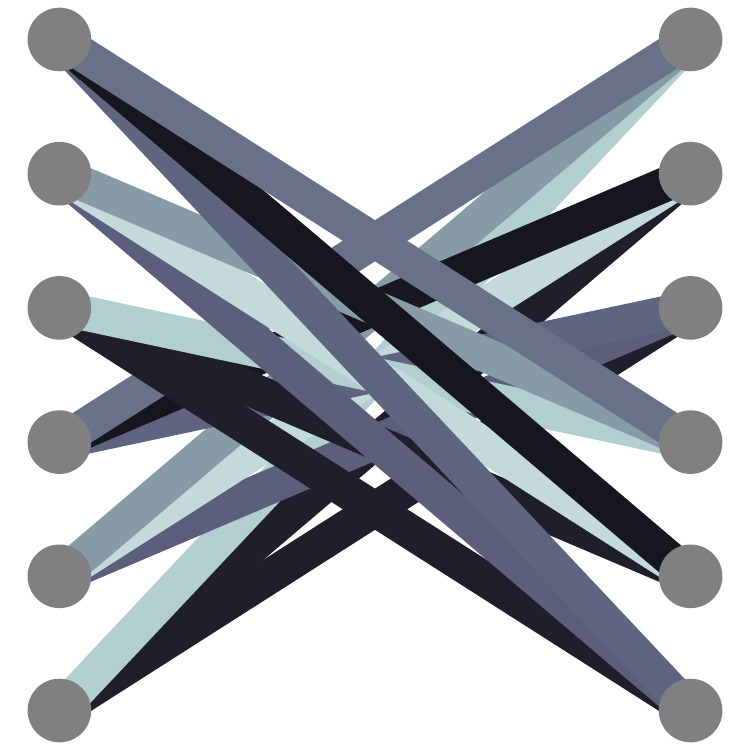}};
        \node[inner sep=0pt] at (0, -2) {\includegraphics[height=1.8cm]{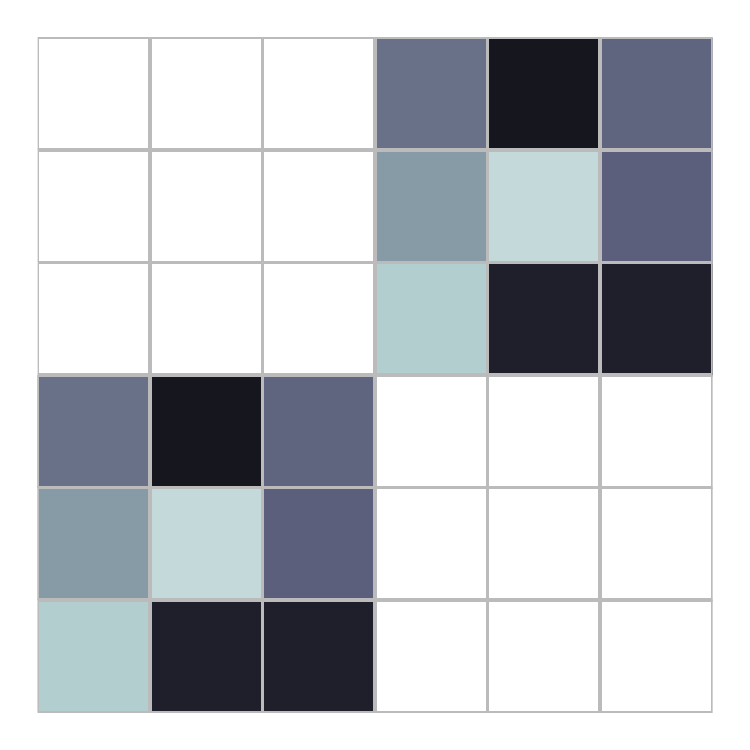}};
        \node[inner sep=0pt] at (0, 1.2) {\scriptsize Between points};
        \end{scope}

        \begin{scope}[xshift=5.7cm,on grid]
        \node[inner sep=0pt] at (0, 0) {\includegraphics[height=1.8cm]{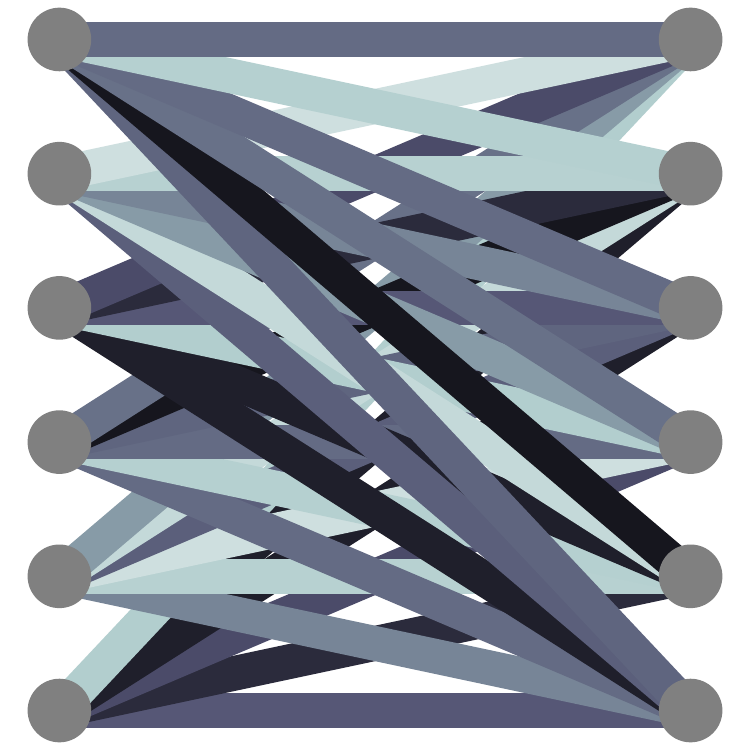}};
        \node[inner sep=0pt] at (0, -2) {\includegraphics[height=1.8cm]{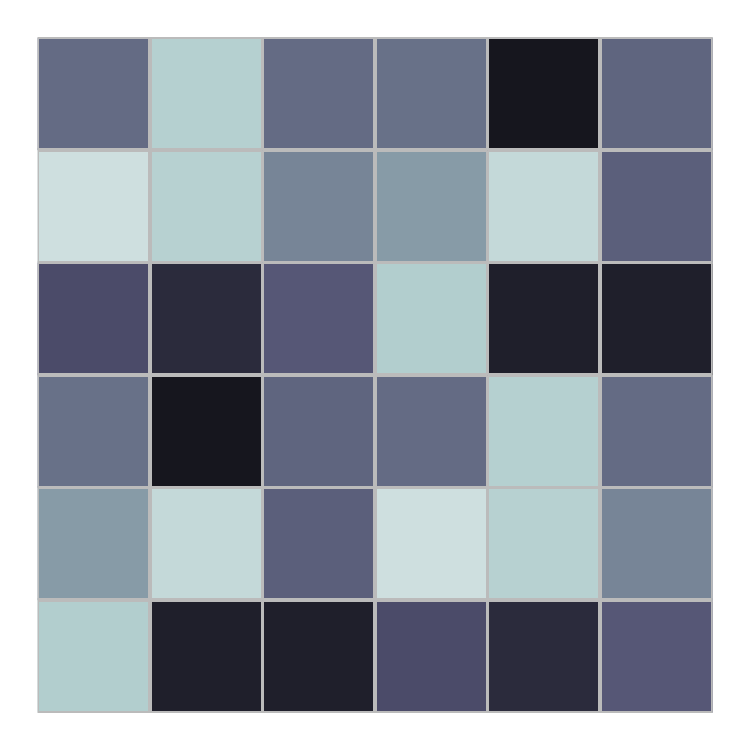}};
        \node[inner sep=0pt] at (0, 1.24) {\scriptsize $f$};
        \end{scope}

    \end{tikzpicture}
    \vspace{-0.5cm}
    \caption{Illustration of different interactions and the corresponding Jacobian decoupling for equivariant $f$ and inputs $\SetElement_1, \SetElement_2 \in \R^3$.}
    \label{fig:jacobian}
    \vspace{-0.5cm}
\end{figure}

\textbf{Attention.} In the previous part we described how to implement the exact trace for \eqref{eq:deepset}. The attention layer (\eqref{eq:attention}) presents more challenges since it introduces the similarity measure in form of a dot product. The final attention weights for the $i$th element is the $i$th row of the $\softmax(QK^T)$ matrix. If we want a zero diagonal Jacobian, we need to obtain the elements of $Q$ with a MADE network $f_q(\SetElement_i)$. We also need to mask the diagonal of $QK^T$ with $-\infty$ to get zeros there after the $\softmax$ operation. This ensures that we never take the dot product of an element with itself. Therefore, the function $f_k(\cdot)$ can be an unrestricted network. At last, the output of an attention layer is a weighted sum of $\vv_i$. Since we have already zeroed-out diagonal entries in the similarity matrix, the function $f_v(\cdot)$ can be unrestricted too. This gives constant trace for the \textit{within} and \textit{between points} interaction. The \textit{per dimension} mapping can be implemented as discussed above. We experimentally show that our implementation does not hinder performance.

\textbf{Discussion.} The key to building an exact trace model is designing functions $g$ and $h$ --- interaction of one dimension with all the others, and interaction of points between themselves --- with zero trace Jacobian. The novelty of our approach is utilizing the properties of permutation equivariant functions. We propose exact trace models that can be either: 1) used as volume preserving flows, 2) combined with trace estimation or exact trace calculation, 3) extended within the framework of Jacobian decoupling \cite{chen2019neural}. In the experiments, we mainly use the third option. In the ablation studies we also explore the capabilities of the volume preserving flow.

\subsection{Architecture}

Our model is a continuous normalizing flow (CNF) that constitutes of a permutation invariant base distribution and a permutation equivariant function $f$ that defines the ordinary differential equation dynamics. A CNF can be seen as a single transformation: we can combine it with other layers (fixed or learnable) or stack multiple CNFs. We use a normal distribution $\mathcal{N}(\bm{0}, \mI)$ for our base distribution. If we require a distribution on a bounded domain, e.g., uniform $\mathcal{U}(\bm{0}, \bm{1})$, we can go to the $\R^{\SetSize \times \Dim}$ space with logit function, so that we can transform points with our CNF. We then map everything back to the original space with a sigmoid function.

The model is trained by maximizing the log-likelihood $p(\Set)$. Following the previous works, we set the integral bounds in \eqref{eq:instant_change_of_variables_density} to: $t_0=0$ and $t_1=1$. Instead of backpropagating through solver steps, we use a memory efficient adjoint method to obtain the gradients \cite{farrell2013automated,chen2018neural}. We use an adaptive ODE solver \cite{dormand1980family}, whose error is bounded, in contrast to fixed-step solvers that have bounded number of steps. In the transformation $f$, we use permutation equivariant layers based on \eqref{eq:deepset} or \eqref{eq:attention}. One important measure of scalability of ODE models is the number of evaluations that the solver performs. In the experiments we show our exact trace model requires fewer steps.

To sample new realizations, we first sample $\SetSize$ points from the base distribution $\ZSet \sim q(\ZSet)$. We then solve an ODE in the opposite direction (from $1$ to $0$) to get samples $\Set$ from the learned distribution $p(\Set)$.

\section{Related work}\label{sec:related}

\subsection{Models for exchangeable data and sets}

\citet{zaheer2017deep,qi2017pointnet} propose permutation equivariant layers (\eqref{eq:deepset}) with universal approximation. \citet{lee2019set} demonstrate the same property holds for the attention based network and propose an extension for large sets that uses a small number of learnable inducing points. \citet{wagstaff2019limitations} study the theoretical limitations of invariant functions with sum aggregation. Alternative aggregation schemes include max-pooling, Janossy pooling \citep{murphy2018janossy}, featurewise sort pooling \citep{zhang2019fspool}, and using graph methods \citep{grover2016node2vec}.

\citet{korshunova2018bruno} treat points independently and use an exchangeable Student-t base distribution. \citet{bender2019permutation} use an autoregressive function on ordered sequences and optimize the following likelihood:
\begin{align}\label{eq:order_likelihood}
    p(\Set) = \frac{1}{\SetSize !} p(\SetElement_{(1)}, \dots, \SetElement_{(\SetSize)}) .
\end{align}
An obvious disadvantage is the requirement to impose a canonical order, which may not be possible and is often arbitrary.  Other autoregressive models for \textit{set-like} data include graph generation on breath-first search ordering \citep{you2018graphrnn}; and ordering on z-axis for meshes and point clouds \citep{nash2020polygen,sun2020pointgrow}. In temporal point processes the order of points is known so existing approaches mostly use autoregressive models \citep{du2016recurrent, mei2017neural,shchur2019intensity,shchur2020fast}.

In contrast, \citet{yang2019pointflow} model point clouds with a variational autoencoder, having one CNF for posterior and another CNF for point locations. \citet{Yuan2020Variational} apply the variational autoencoder approach in spatial point processes by defining a nonparametric kernel. VAE approach relies on de Finetti’s theorem \cite{de1937prevision, o2009exchangeability}:
\begin{align}\label{eq:definetti}
    p(\Set) = \int p(\vz) \prod_{\SetElement_i \in \Set} p(\SetElement_i | \vz) d\vz ,
\end{align}
and maximizes the evidence lower bound:
\begin{align*}
    \log p(\Set) \ge \E_q \left[ \log p(\Set | \vz) \right] - \KL \left[ q(\vz | \Set) || p(\vz | \Set) \right] .
\end{align*}
Using amortized inference, parameters of the posterior are defined as a permutation invariant function of $\Set$. Taking the gradients w.r.t.\ the samples is enabled with the reparametrization trick \citep{kingma2013auto}.
\citet{Yuan2020Variational} demonstrate an application of their approach for recommender systems by embedding the non-spatial data with a graph neural network. Our model supports this too, without any changes to the architecture.

\subsection{Equivariant continuous normalizing flows}

The idea of using CNFs together with equivariant functions has been proposed by \citet{kohler2020equivariant} in the context of multi-body physical systems by modeling the Gibbs distribution with a simple equivariant function, based on a Gaussian kernel. In \Secref{sec:model} we offer ways to use strictly more expressive equivariant functions with closed-form trace. In the Supplementary Material we demonstrate how our method outperforms Gaussian kernel by a large margin. \citet{rezende2019equivariant} propose modeling physics systems by transforming the initial samples with the Hamiltonian dynamics. This method is similar to CNFs in general, but requires variational approximation for training. \citet{li2020exchangeable} propose modeling sets similar to our baseline model with stochastic trace. We demonstrate how this is not scalable and propose an efficient alternative. We also show how this can be applied for spatial point process models with exact likelihood --- an important, previously unsolved problem.

\subsection{Improvements to Neural ODEs}

Neural ODEs \citep{lu2018beyond, chen2018neural} were recently proposed as a continuous equivalent of the widely adopted residual architecture \citep{he2016deep}. By taking advantage of their properties we can construct a continuous version of the normalizing flow. \citet{grathwohl2018ffjord} demonstrate the benefits of this when using a stochastic trace estimator. \citet{chen2019neural} first used general network decoupling to train CNFs which led to improvements in performance. We use their approach to extend our base models with fixed trace. A lot of recent work focused on improving the stability of ODEs, including better representation power \citep{dupont2019augmented}, improved training \citep{gholami2019anode, zhuang2020adaptive}, regularization of the solver dynamics \citep{finlay2020train, kelly2020learning}, etc. Our work is orthogonal to these methods since it focuses on the special case of modeling invariant data but can also be easily combined with them.

\subsection{Point processes}

Cox process \citep{cox1955some} is a doubly stochastic Poisson process that defines intensity as a realization of a random field, e.g.\ a Gaussian process. For example, a Neyman-Scott process \citep{neyman1958statistical} generates initial (parent) points from a Poisson process, and final (children) points around parents. A more specific example is a Mat\'ern cluster process \citep{matern2013spatial} that places children in balls centered around parents. A variation where children follow a normal distribution is called a modified Thomas process \citep{thomas1949generalization}. Even though we do not specify the random field explicitly, the randomness from the base density is allowing us to model these processes with equivariant transformations.

Explicit interactions can be defined with a Gibbs process by assigning energy values to point configurations, originally defined in the context of statistical physics \citep{ruelle1969statistical}. However, this comes at a cost of calculating the normalization constant to find exact densities. Besides pseudolikelihood \citep{besag1975statistical}, other works used logistic composite likelihood \citep{clyde1991logistic} and Monte Carlo methods \citep{huang1999improvements}. We avoid this by designing a model with exact likelihood at no flexibility trade-off.

\section{Experiments}\label{sec:experiments}

In the experiments we aim to show how having an exact trace model does not limit the expressiveness --- on the contrary --- we get the benefits of the faster and more stable training. First, we show that our versions of the models are equivalent to their original implementation, then demonstrate the modeling capacity for the point processes, and finally, show how we can scale to bigger datasets.

The detailed explanation of the data processing, hyperparameter tuning, and additional results can be found in the Supplementary Material.\footnote{\url{https://www.daml.in.tum.de/scalable-nf}} We train all of our models on a single GPU (12GB).

\subsection{Matching performance with the exact trace}

We use established tasks \cite{lee2019set,li2020exchangeable} to compare our novel models with their original versions. By obtaining similar performance, we demonstrate their equivalence while additionally having exact trace computation.

\textbf{Maximum value regression.}
We construct a toy task where the goal is to retrieve the maximum value from a set. We use sets of real numbers, and compare the deep set and attention original architectures with their zero trace and exact trace versions. We run each experiment $5$ times and report the mean and the standard deviation of the test set mean absolute error. Table \ref{tab:traditional_result} shows that there is no significant difference in performance and that our models sometimes perform better.

\textbf{Counting the number of unique digits.}
We construct sets of digits from MNIST dataset where the target is the number of unique classes in a set. The model needs to learn to classify images and then count the number of unique classes. Our end-to-end model processes the digits with a multilayer convolutional neural network, aggregates the embeddings using deep set or attention, and outputs the $\lambda$ parameter of a Poisson distribution. We train by maximizing the log-likelihood and evaluate the test performance with accuracy, defined as the frequency of predicting the Poisson mode that matches the true value, as in \cite{lee2019set}.

We run each experiment $5$ times and report the mean and the standard deviation of the test set accuracy. The results in Table \ref{tab:traditional_result} again indicate that our architecture matches the performance of the original models. Note that we set the latent dimension $d_g$ and the hidden dimensions such that the total number of parameters in the exact trace models and the original models is approximately the same.

\begin{table}
    \centering
    \begin{tabular}{lcc}
 &          Deep set  & Attention \\
\multicolumn{2}{l}{Max value regression} & \\
\midrule
Unchanged   & 0.032 $\pm$ 0.003 & 0.029 $\pm$ 0.002 \\
Zero trace  & 0.032 $\pm$ 0.003 & \textbf{0.022 $\pm$ 0.002} \\
Exact trace & \textbf{0.027 $\pm$ 0.003} & 0.036 $\pm$ 0.011 \\
&  &  \\
Counting digits & & \\
\midrule
Unchanged   & 0.838 $\pm$ 0.010 & 0.854 $\pm$ 0.012 \\
Zero trace  & 0.836 $\pm$ 0.017 & \textbf{0.872 $\pm$ 0.017} \\
Exact trace & \textbf{0.843 $\pm$ 0.013} & 0.842 $\pm$ 0.005 \\
&  &  \\
Point cloud & & \\
\midrule
Unchanged   & \textbf{0.7685 $\pm$ 0.017} & 0.7495 $\pm$ 0.007 \\
Exact trace & 0.7523 $\pm$ 0.008 & \textbf{0.7714} $\pm$ 0.012 \\
\end{tabular}

    \caption{Mean absolute error on maximum value regression (top), accuracy of counting the  unique digits in a set (middle), and accuracy of point cloud classification (bottom).}
    \label{tab:traditional_result}
    \vspace{-0.5cm}
\end{table}

\textbf{Point cloud classification.} Similar to \cite{li2020exchangeable}, we use labeled point clouds \cite{wu20153d} in a supervised classification task. We feed in a set of $512$ points to a two-layer convolutional network followed by an ODE with the deep set or attention network, apply max-pooling over all the elements and output the logits with a two-layer feedforward network. The hidden dimensions range in $(32, 128)$, and there are $40$ classes in total. The results in Table \ref{tab:traditional_result} show that both the exact trace and original transformation perform similarly.

Note that the models in these experiments were not tuned to achieve the best performance on these tasks but rather to compare the performance between two different implementations of the deep set and attention architecture, the original versus the exact trace version. We can conclude that there is no significant difference in these two implementations, as intended.
x
\subsection{Modeling point processes}

\textbf{Synthetic data.} For all datasets we simulate $1000$ realizations on $(0, 1)\times(0, 1)$ area. \underline{Thomas} dataset is simulated by first sampling $m \sim \Poisson(3)$ parents uniformly, and then for each, we sample $n_i \sim \Poisson(5)$ children from the normal distribution centered on the parents ($\sigma = 0.01$). In \underline{Mat\'ern} the children are uniformly sampled inside discs centered on parents ($R = 0.1$). \underline{Mixture} is an inhomogeneous process that generates $n \sim \Poisson(64)$ points from a mixture of $3$ normal distributions.

\textbf{Real-world data.} \underline{Check-ins NY} \cite{cho2011friendship} is a collection of locations of social network users. We consider points in New York that have $1938$ unique users. The single realization corresponds to all the recorded locations for one user. We also construct a smaller dataset for a different city (\underline{Check-ins Paris}) that has $286$ users. \underline{Crimes}\footnote{\url{https://nij.ojp.gov/funding/real-time-crime-forecasting-challenge}} dataset contains daily records of locations and types of crimes that occurred in Portland. Each day contains between $298$ and $736$ points with $480$ on average.

\textbf{Baselines.} Inhomogeneous Poisson process (\underline{IHP}) models $p(\Set) = \prod_i p(\SetElement_i)$, i.e., it assumes independence between the points. We use a normalizing flow with spline coupling layers \cite{durkan2019neural}. IHP cannot capture interactions between the points but models a fixed density on $\BoundedRegion$. Additionally, we use an \underline{\smash{autoregressive}} model \cite{bender2019permutation}, where the input points are sorted on the first dimension and we maximize \eqref{eq:order_likelihood}. We also include the model without exact likelihood, an importance weighted autoencoder (\underline{IWAE}). For the permutation invariant encoder we either use deep set or attention layers with layer normalization \cite{ba2016layer} and max-pooling to output parameters of the posterior $p(\vz | \Set)$. The likelihood term $p(\Set|\vz)$ is an IHP conditioned on $\vz$. To estimate the likelihood (\eqref{eq:definetti}) on the held-out data we use $5000$ samples \cite{burda2015importance}. The detailed description of data processing steps and the hyperparameters we used is in the Supplementary Material.

\begin{figure}
    \centering
    \includegraphics{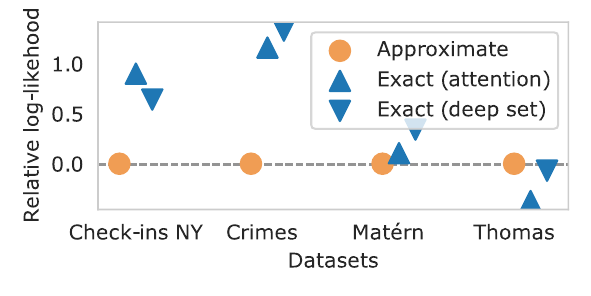}
    \vspace*{-0.3cm}
    \caption{Relative difference in log-likelihood (higher is better) centered at the approximate trace model scores. We compare two different architectures for our approach.}
    \label{fig:relative_nll}
\end{figure}

\begin{table*}
    \centering
    \small
    \begin{tabular}{lcccccc}
{} &                        Check-ins NY   &  Check-ins Paris &           Crimes &         Mat\'ern &          Mixture &           Thomas \\
\midrule
IHP             & -1.61 $\pm$ 0.02 (0.035) & -2.87 $\pm$ 0.04 (0.026) & -2.34 $\pm$ 0.01 (0.002) & -0.21 $\pm$ 0.00 & -2.06 $\pm$ 0.00 & -0.01 $\pm$ 0.00 \\
Autoregr.       & -1.69 $\pm$ 0.02 (0.016) & -3.39 $\pm$ 0.05 (0.022) & -1.61 $\pm$ 0.06 (0.006) & -0.38 $\pm$ 0.02 & -1.55 $\pm$ 0.01 &  0.13 $\pm$ 0.02 \\
IWAE            & -2.03 $\pm$ 0.07 (0.035) & -3.57 $\pm$ 0.02 (0.033) & \textbf{-2.35 $\pm$ 0.00} (0.002) & -0.59 $\pm$ 0.01 & -2.05 $\pm$ 0.01 & -0.23 $\pm$ 0.00 \\
CNF (Our)       & \textbf{-2.30 $\pm$ 0.09} (\textbf{0.012}) & \textbf{-3.58 $\pm$ 0.03} (\textbf{0.014}) & -2.34 $\pm$ 0.01 (0.002) & \textbf{-0.77 $\pm$ 0.08} & \textbf{-2.07 $\pm$ 0.00} & \textbf{-0.55 $\pm$ 0.00} \\
\end{tabular}

    \caption{Per-point negative log-likelihood (in brackets: Wasserstein distance of the between points distance distributions) on point process data. Lower values are better for both metrics.}
    \label{tab:pp_nll_results}
\end{table*}

\begin{figure*}
    \centering
    \includegraphics[height=2.42cm]{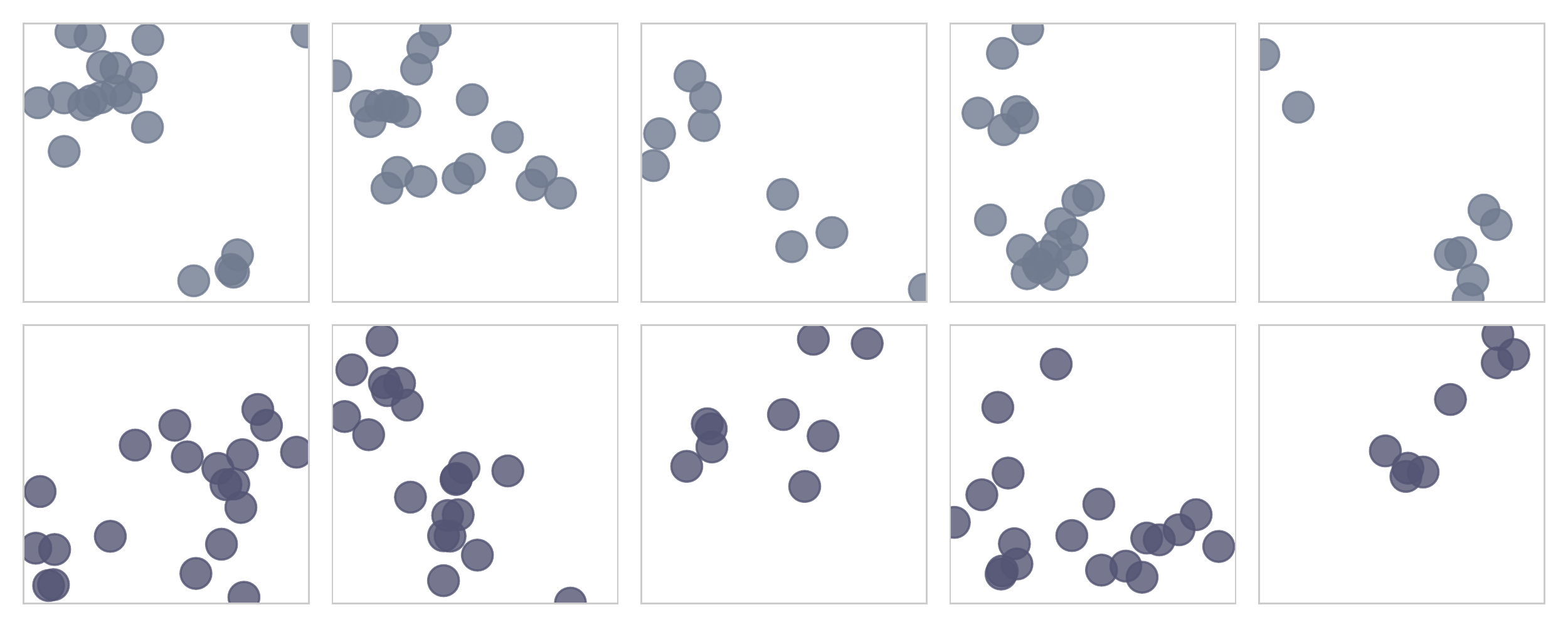}
    \includegraphics[height=2.42cm]{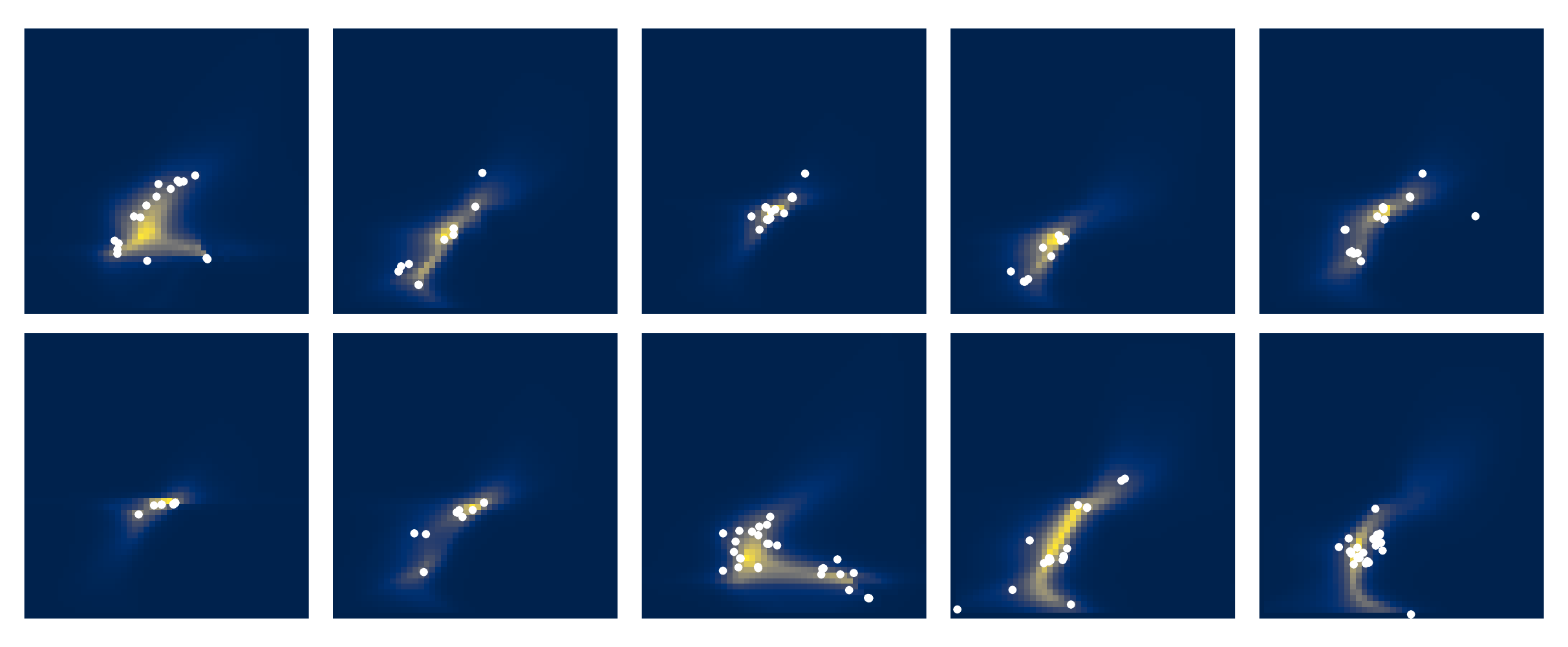}
    \includegraphics[height=2.46cm]{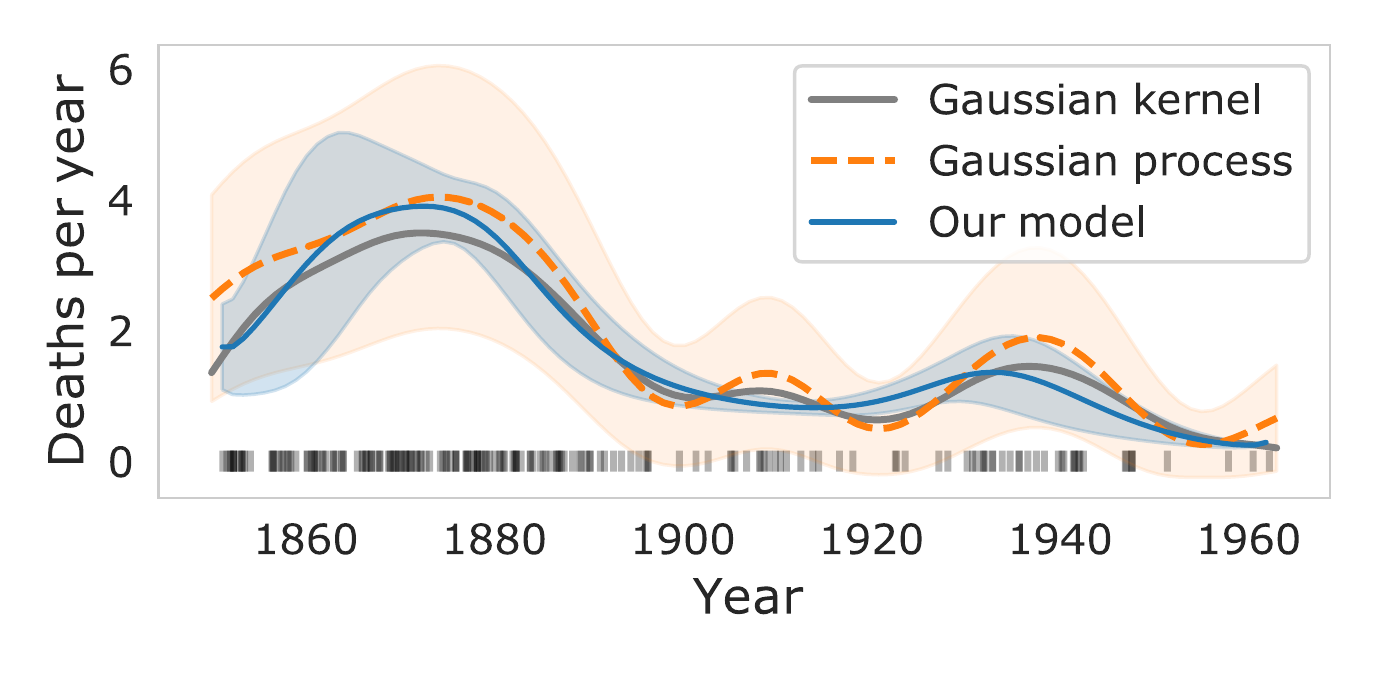}
    \vspace*{-0.5cm}
    \caption{Left: Thomas data (top) and samples (bottom). Middle: Density realizations on NY data. Right: Comparison to GP-PP model.}
    \label{fig:samples}
\end{figure*}

\textbf{Results.} Datasets are split into training, validation and test sets (60\%-20\%-20\%). We train with early stopping, use mini-batches of size $64$ and Adam optimizer with the learning rate of $10^{-3}$ \citep{kingma2014adam}. The loss we use is per-point negative log-likelihood, i.e.\ $\mathcal{L}(\Set) = -\log p(\Set) / \SetSize$. Table \ref{tab:pp_nll_results} reports the mean and the variance of the loss on a held-out test set averaged over $5$ runs for models selected based on their validation set performance. Our method beats others in most of the cases, and is within the margin of error for the remaining. In \Figref{fig:relative_nll}, we compare our method to an approximate trace approach. We see that the relative difference to the log-likelihood of an approximate model is mostly in favour of our method.

We additionally quantitatively test the sampling quality by comparing the distributions of in-between point distances for real data and new model samples. Again, our model outperforms the competitors. We conclude that CNFs are well suited for modeling exchangeable data. Further, samples in \Figref{fig:samples} (Left) qualitatively demonstrate that we capture the true underlying process, in this case, the clustering behavior.
We can see that the individual points group around, forming small clusters, as expected. We additionally compute the Ripley's K function that measures the homogeneity of the process, where the higher number denotes a more clustered process. The real data has $K=1.92$ and our model achieves $K=1.93$, almost a perfect match. For reference, the baseline inhomogeneous model has $K=1.41$.

Finally, in the Supplementary Material we show how our method can be orders of magnitude faster during test time evaluation. This is because approximate trace models have to compute the trace exactly in this setting, thus, even if the training steps take a small amount of time, the model in the production can be prohibitively slow. We also demonstrate that removing variance imposed by stochastic estimators can lead to much better convergence in the early stages of training. This means our method achieves better results faster, and the results stay better according to \Figref{fig:relative_nll}.

\begin{figure}[h!]
    \centering
    \begin{tikzpicture}
        \node[inner sep=0pt] at (-3.5, 1) {\small $\SetSize = 20$};
        \node[inner sep=0pt] at (-3.5, 0) {\small $\SetSize = 30$};
        \node[inner sep=0pt] at (-3.5, -1) {\small $\SetSize = 40$};
        \node[inner sep=0pt] at (0, 0) {\includegraphics[width=0.7\linewidth]{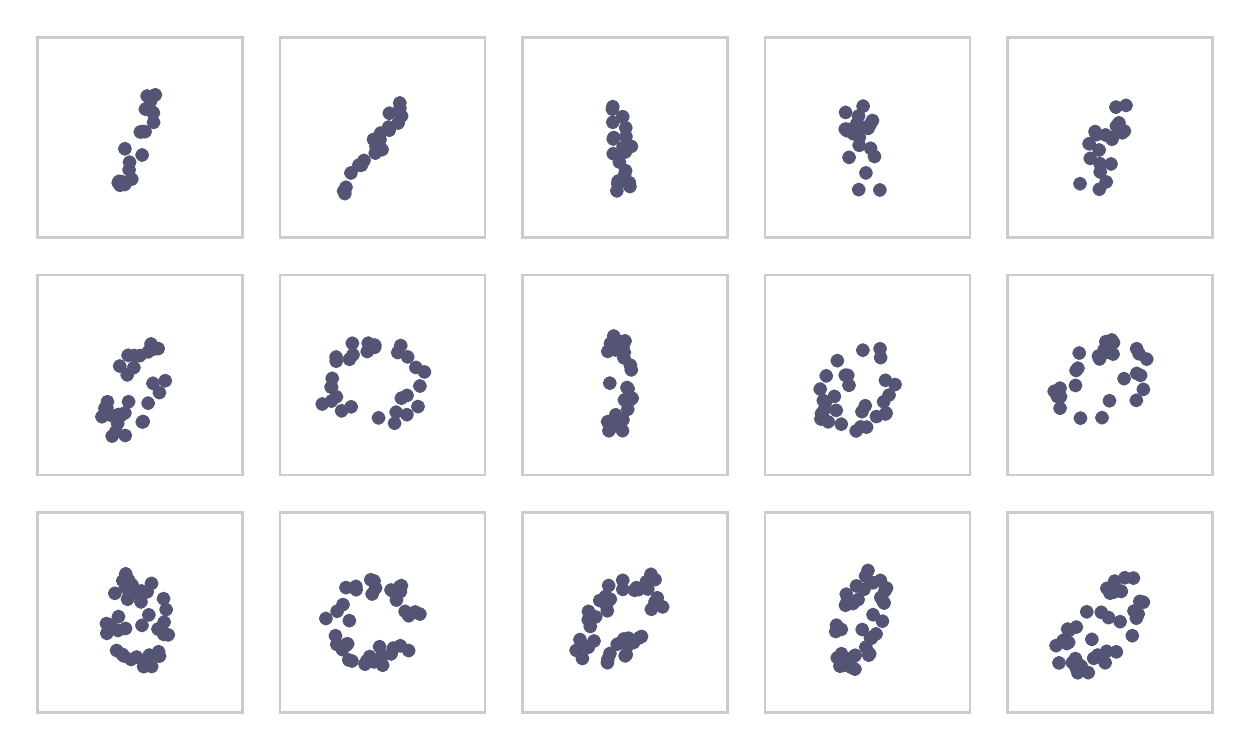}};
    \end{tikzpicture}
    \caption{Setting different set sizes $\SetSize$ results in different types of samples for data containing discretized digits.}
    \label{fig:cardinality}
\end{figure}

\textbf{Set cardinality.} Above, we followed previous works and used the loss that ignores the cardinality of the set captured in the $p(\SetSize)$ term. The empirical distribution of the number of points in our data is almost always simple and unimodal, meaning that the burden of the modeling is on the locations of the points and their interactions. It is easy to have a full point process model by adding $p(\SetSize) = \Poisson(\Intensity)$, where $\Intensity$ is learnable.

To show this we construct a toy dataset consisting of points that form shapes of two digits: digit $0$ that has $\SetSize=40$ points and digit $1$ with $\SetSize=20$. Learning $p(\SetSize)$ is trivial in this case, the hard part is learning the distribution of point locations given $\SetSize$, $p(\Set | \SetSize)$. After fitting the data with our attention-based model, we show in \Figref{fig:cardinality} how sampling with different $\SetSize$ gives different shapes, as expected.

\textbf{Stochastic density.}
Since the density $p(\Set)$ is influenced by interactions between the points, it is not stationary but depends on $\Set$ (unlike inhomogeneous model). We can draw a new \textit{density realization} on $\BoundedRegion$ by sampling $\Set \sim p(\Set)$. To be precise, we first draw $\Set \sim p(\Set)$, which defines the conditional density at non-observed points $\SetElement \in \BoundedRegion, \SetElement \notin \Set$. We can sample from this distribution or use it to detect the out of distribution data. \Figref{fig:samples} (Middle) shows different conditional densities based on the real data samples for a model trained on Check-ins NY data. Having different density realizations connects to the notion of the doubly stochastic processes \cite{cox1955some}.

A popular way to define a Cox process is by defining the random field with a Gaussian process \citep{rasmussen2016gp}, and then drawing an intensity function as a random realization. \Figref{fig:samples} (Right) shows how a Gaussian process modulated Poisson process (GP-PP) \citep{lloyd2015variational} fits the coal mining disaster data, with $191$ points collected from 1851 to 1962. We additionally plot the empirical intensity measure estimated with a Gaussian kernel. Finally, we plot the mean and the standard deviation of $100$ \textit{intensity function} realizations from our model. It closely matches the underlying intensity and provides the uncertainty measure. We get the intensity function from our model by scaling the density function with the learned parameter $\Intensity$.

The above two examples demonstrate how the stochasticity of the base distribution samples leads to different final conditional distributions, without Gaussian process modulation.

\begin{figure*}
    \centering
    \includegraphics[height=3cm]{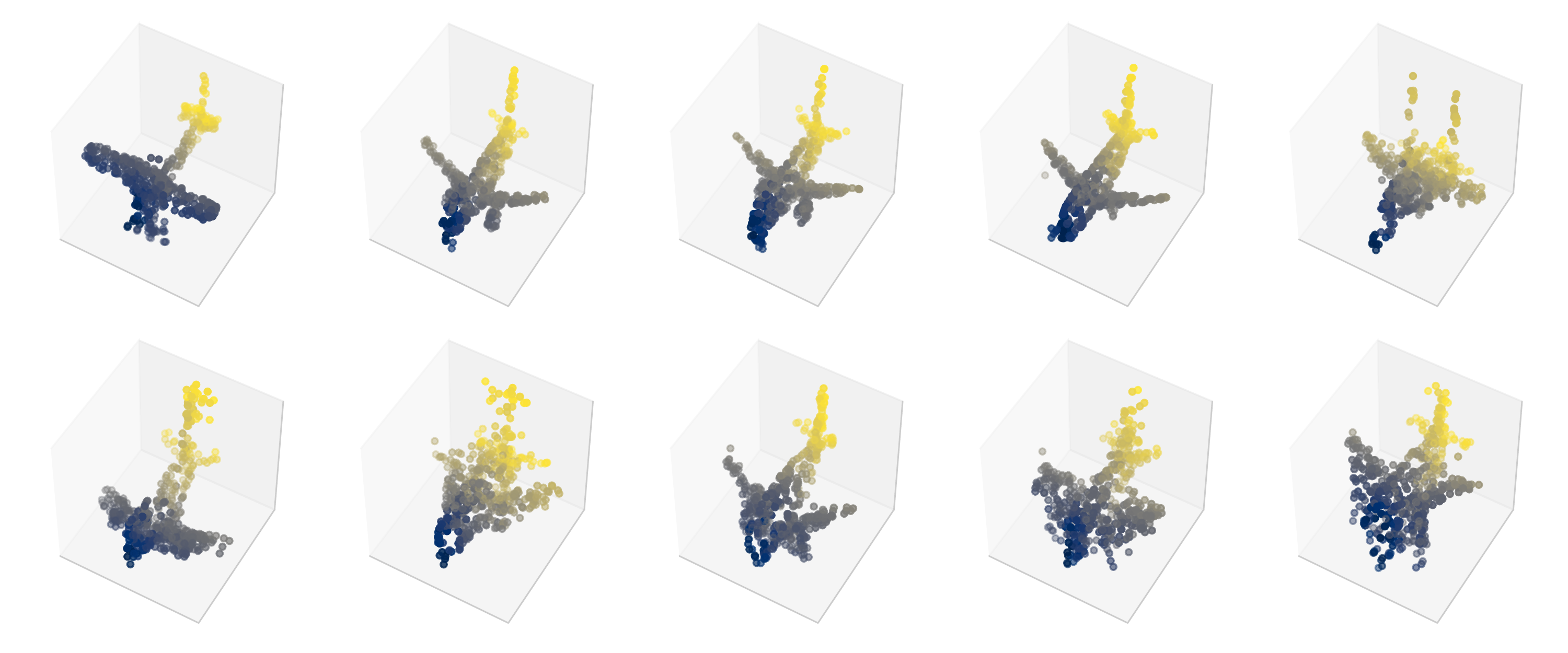}
    \includegraphics[height=3cm]{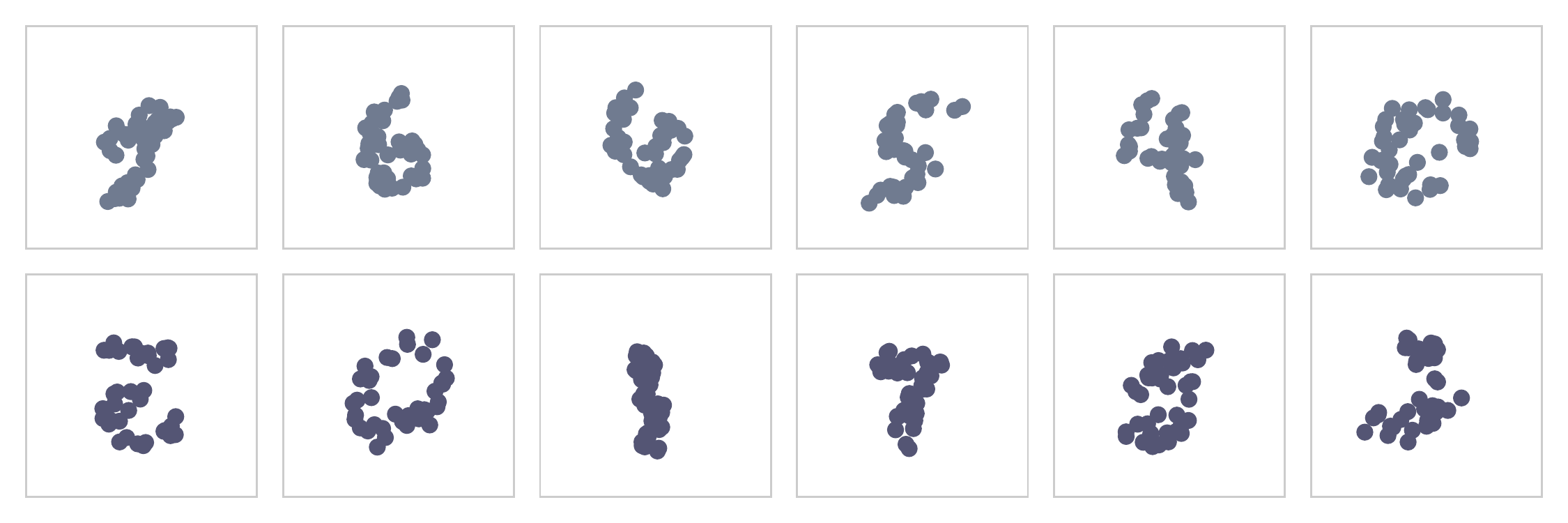}
    \caption{Real data (top), and our model samples (bottom) for airplane class in point clouds data and discretized digits from MNIST.}
    \label{fig:airplane_samples}
\end{figure*}

\begin{figure}[h]
    \centering
    \includegraphics[height=3.5cm]{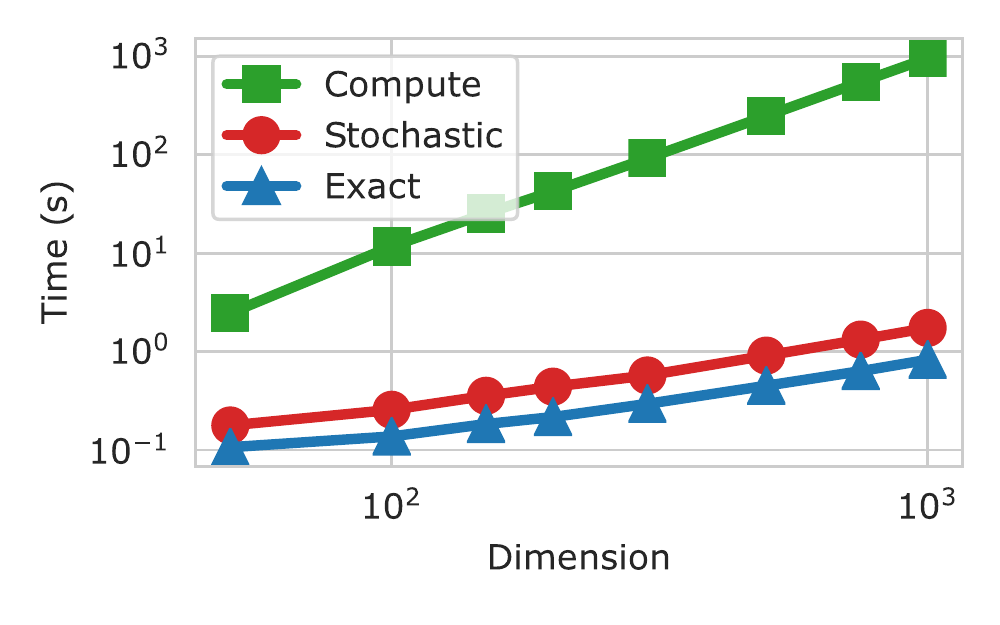}
    \vspace*{-0.5cm}
    \caption{Comparing the time (log scale) to evaluate trace for different estimators across different dimension sizes of data.}
    \label{fig:trace_speed}
\end{figure}

\subsection{Scaling to large sets}

In the previous section we demonstrated that CNFs are a good choice for modeling exchangeable data. Now we test how different trace calculation approaches affect the performance on bigger datasets.

First, we compare the speed of different trace estimation methods, namely: our exact trace model, stochastic trace estimation with Hutchinson's estimator, and computing the trace exactly without using any specialized architectures. We use the same type of the model and the same number of parameters. We record the time to calculate the trace on dimension sizes in the range $(50, 1000)$. \Figref{fig:trace_speed} shows that our method is faster than both competing approaches. Finding the trace exactly on arbitrary architectures (not using methods from Section~\ref{sec:model}) becomes untractable very soon. That means that even if we used relatively fast stochastic trace estimation during training, our model would be of limited use in production when we want to know the exact density.

\begin{figure}
    \centering
    \includegraphics{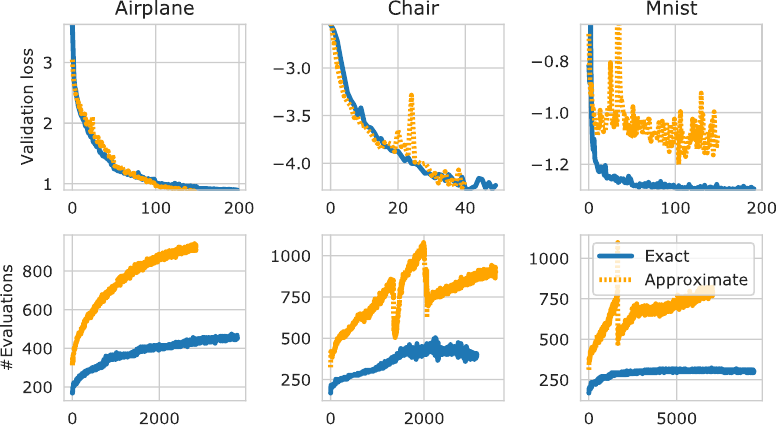}
    \vspace*{-0.3cm}
    \caption{Comparison of the number of solver evaluations and the validation loss during training for exact and approximate trace.}
    \label{fig:evaluations_comparison}
\end{figure}

Further, we test the performance on real-world large sets of points. For that we model point clouds, in particular, we use airplane and chair classes \cite{wu20153d} where each sample contains $512$ points. We additionally use discretized MNIST digits dataset, i.e., sample the coordinates of $40$ black pixels and add a small noise to them to avoid degenerate solutions (placing infinite mass on a single point). The model consists of $12$ stacked continuous normalizing flows with the deep set or the attention transformations, and a similar hyperparameter setup as in \citet{li2020exchangeable}. We observe that both versions of the model perform similarly.

\Figref{fig:airplane_samples} shows samples of airplane point clouds and discretized digits from our model. We can see that the model captures the complex data well, and the samples are visually similar to previous works \citep{li2020exchangeable}. Next, we test an important metric that shows scalability --- number of ODE solver evaluations. In \Figref{fig:evaluations_comparison} we can see that having an exact trace in a model allows us to obtain the lower number of total evaluations, meaning that both the training and inference will be faster compared to the stochastic trace estimation. At the same time we do not sacrifice performance, as is seen in the evolution of validation loss over training, where our method matches or outperforms the competing approach. Note that we also observe noisier training and validation curves for approximate trace meaning the training is less stable. An interesting observation is that both models adapt the complexity by requiring more evaluations later in the training, when they learn the underlying distribution better.

\begin{figure}[h]
    \centering
    \includegraphics[height=3.4cm]{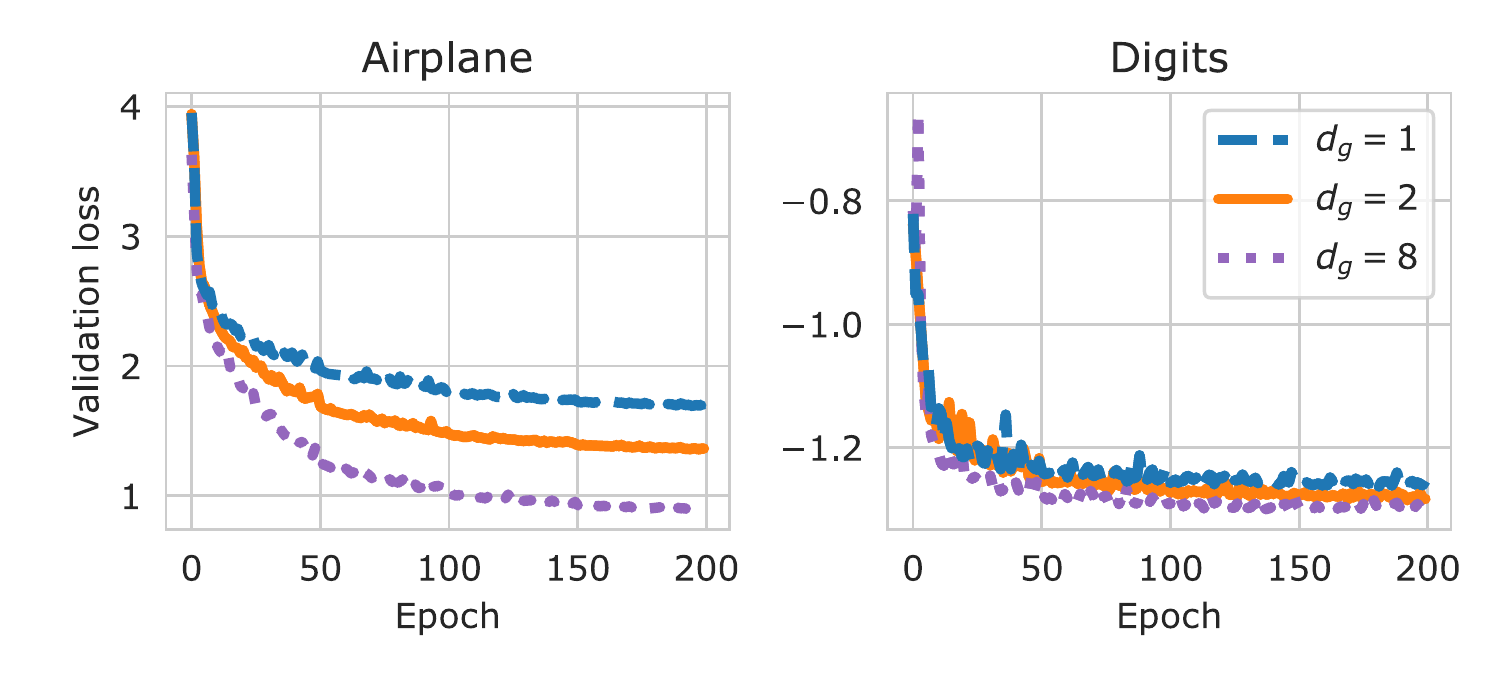}
    \caption{How different latent size $d_g$ influences expressiveness.}
    \label{fig:latent_dim_loss}
\end{figure}

\subsection{Ablation studies}

\textbf{Latent dimension $d_g$.} We do an ablation study for different latent dimensions $d_g$ for the network $g$, as described in \Secref{sec:model}. We test the sizes of $1, 2$ and $8$. \Figref{fig:latent_dim_loss} shows that higher $d_g$ leads to better performance as expected, but in the case of the digits dataset, we reach the point of diminishing returns very fast.

\textbf{Volume preserving flow.} The version of our model which has the constant zero trace only accounts for the interactions between the points, but not the full transformation of the points themselves. This means that if the base process is homogeneous, the volume preserving flow cannot learn the inhomogeneous part of the process, i.e., the fixed density over $\BoundedRegion$. In the Supplementary Material we show how it performs against IHP model and demonstrate that on some datasets it is enough to use the model with zero trace transformations alone. In the end, it is the best to use the full model with exact trace since it captures both the inhomogeneous part and the interaction between the points.

\textbf{ODE regularization.} The goal of our method is to reduce the variance during training, which leads to having a better performance both during training and inference. Most of the recent works on improving neural ODE models (see \Secref{sec:related}) focused on learning functions $f$ that reduce the \textit{stiffness}, meaning we get better behavior in the ODE solver. In the point process experiments we tried Frobenius norm regularization \cite{finlay2020train}, but did not observe significant changes in performance.

\textbf{Induced attention.} \citet{lee2019set} propose an alternative to the attention layer that uses a set of inducing points $m$, aiming to reduce the computation cost when $m \ll \SetSize$. We compare it with the usual self attention on the point process data, and notice that it usually performs worse. The details can be found in the Supplementary Material. Having an extra hyperparameter is another drawback of this method. However, the lower memory footprint of induced attention can help train models on sets with very large cardinality.

\section{Conclusion}\label{sec:conclusion}

In this work we presented a novel model for point processes and random sets that uses continuous normalizing flows with equivariant functions that has the exact trace of the Jacobian available. The exact trace, in particular, allows us to scale to bigger datasets since this approach is more efficient but at the same time, it achieves better performance.

Throughout the experimental evaluation we demonstrated how we can successfully model complex point processes with interactions between the points while having tractable likelihood and straightforward sampling. This is the first spatial point process model with these properties. We further show that our model outperforms all other competitors on this task. At the same time, it does not require any special assumptions on the data, like independence between the points, nor does it need canonical ordering.

Our method relies in particular on two types of deep learning models: continuous normalizing flows and equivariant neural networks. By utilizing new techniques developed in these two subfields (new types of equivariant layers and ODE regularizations) we hope it will perform even better in the future.

\section*{Acknowledgments}

This research was supported by BMW AG and by the German Federal Ministry of Education and Research (BMBF), grant no. 01IS18036B. The authors of this work take full responsibilities for its content.

\bibliography{references}
\bibliographystyle{icml2021}

\clearpage
\appendix
\section{Theoretical background}

\begin{theorem}[Equivariant flows]\label{th:equivariant-flow}(Adapted from \citet{papamakarios2019normalizing}, Section 5.6, Lemma 1) Let $p(\Set)$ be the density function of a flow-based model with transformation $f:\R^{\SetSize \times \Dim} \rightarrow \R^{\SetSize \times \Dim}$ and base density $q(\ZSet)$. If $f$ is equivariant with respect to $\Gamma$ and $q$ is invariant with respect to $\Gamma$, then $p(\Set)$ is invariant with respect to $\Gamma$.
\end{theorem}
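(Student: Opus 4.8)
The plan is to unwind the change-of-variables formula that defines $p$ and push a group element $\gamma \in \Gamma$ through it: equivariance moves $\gamma$ past $f^{-1}$, invariance of $q$ then absorbs it, and a one-line conjugation argument shows it leaves the Jacobian determinant unchanged. Throughout I treat each $\gamma$ as acting on $\R^{\SetSize \times \Dim}$ by a fixed linear map --- for the permutation group relevant here this is just a permutation of the $\SetSize$ rows, i.e.\ left multiplication by a permutation matrix --- and I write $g = f^{-1}$, which is well defined and differentiable since $f$ is an invertible differentiable map with differentiable inverse.

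First I would record the two forms of the hypotheses actually used. Equivariance of $f$ reads $f(\gamma \ZSet) = \gamma f(\ZSet)$ for all $\ZSet$; applying $f^{-1}$ to both sides and relabelling yields equivariance of the inverse, $g(\gamma \Set) = \gamma g(\Set)$. Invariance of $q$ reads $q(\gamma \ZSet) = q(\ZSet)$ for all $\ZSet$. Next, write the model density as $p(\Set) = q(g(\Set))\,\lvert \det J_g(\Set) \rvert$ and evaluate it at $\gamma \Set$. The first factor is immediate: $q(g(\gamma \Set)) = q(\gamma g(\Set)) = q(g(\Set))$, using inverse-equivariance and then invariance. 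For the second factor, differentiate the identity $g(\gamma \Set) = \gamma g(\Set)$ with respect to $\Set$; since $\gamma$ is linear, the map $\Set \mapsto \gamma \Set$ has constant Jacobian $\gamma$, so the chain rule gives $J_g(\gamma \Set)\,\gamma = \gamma\, J_g(\Set)$, hence $J_g(\gamma \Set) = \gamma\, J_g(\Set)\, \gamma^{-1}$. Taking determinants, the factors $\det \gamma$ and $\det \gamma^{-1}$ cancel, so $\lvert \det J_g(\gamma \Set) \rvert = \lvert \det J_g(\Set) \rvert$. Multiplying the two factors gives $p(\gamma \Set) = q(g(\Set))\,\lvert \det J_g(\Set) \rvert = p(\Set)$, which is the claim.

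I do not expect a genuine obstacle here --- this is a standard fact --- so the honest thing to flag is that the only step needing care is the Jacobian bookkeeping under the group action: one must be sure the action is by a map whose derivative is itself (so the chain rule produces the clean conjugation $\gamma J_g \gamma^{-1}$) and that the determinant factors then cancel. For the permutation action both are automatic, since permutation matrices are constant and orthogonal; more generally it suffices that $\Gamma$ act by volume-preserving linear maps, or in fact just by linear maps, since the $\det\gamma$ cancels regardless. (Equivalently and more conceptually, $p$ is the pushforward $f_\# q$, and $\gamma_\# p = \gamma_\# f_\# q = f_\#\gamma_\# q = f_\# q = p$ by equivariance of $f$ and invariance of $q$; but the explicit computation above is more self-contained and matches the change-of-variables statements used earlier.) Finally I would remark that the continuous-flow case of \eqref{eq:instant_change_of_variables_density} needs no separate treatment: the time-$t_1$ flow map of the ODE is one particular such diffeomorphism $f$, and if the dynamics are equivariant at every $t$ then so is the flow map, so the theorem applies verbatim.
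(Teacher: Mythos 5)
Your proof is correct, and it is essentially the standard change-of-variables argument: the paper itself gives no proof of Theorem~\ref{th:equivariant-flow}, deferring entirely to \citet{papamakarios2019normalizing}, and your computation (inverse-equivariance of $g=f^{-1}$, invariance of $q$, and the conjugation identity $J_g(\gamma \Set)=\gamma J_g(\Set)\gamma^{-1}$ killing the determinant factor) is precisely the argument that citation supplies. Your closing remark on the ODE case is also consistent with the paper, which handles that direction separately via Theorem~\ref{th:equivariant-ode}.
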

\begin{proof}
    See \citet{papamakarios2019normalizing}.
\end{proof}

Theorem \ref{th:equivariant-flow} is satisfied for permutation group $\Gamma$ if we, e.g., use normal or uniform base distribution, and an equivariant transformation.

\begin{theorem}[Equivariant ordinary differential equation]\label{th:equivariant-ode}
    Let $f(\Set)$, $f:\R^{\SetSize \times \Dim} \rightarrow \R^{\SetSize \times \Dim}$ be the solution of the ordinary differential equation $d\ZSet(t)/dt = g(\ZSet(t), t)$ on $I = [t_0, t_1]$ with an initial condition $\ZSet(t_0) = \Set$. If $g$ is equivariant with respect to $\Gamma$ then so is $f$.
\end{theorem}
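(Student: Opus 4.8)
The plan is to show that for every $\gamma \in \Gamma$ the curve obtained by applying $\gamma$ to a solution is again a solution, and then appeal to uniqueness of solutions of the initial value problem. Concretely, fix $\gamma \in \Gamma$; for the permutation group $\Gamma$ acts linearly on $\R^{\SetSize \times \Dim}$ by permuting the $\SetSize$ rows, so I will use that $\gamma$ is a (fixed, time-independent) linear map and in particular commutes with $d/dt$.

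First I would let $\ZSet(t)$ denote the solution of $d\ZSet(t)/dt = g(\ZSet(t), t)$ on $I=[t_0,t_1]$ with $\ZSet(t_0)=\Set$, so that by definition $f(\Set) = \ZSet(t_1)$. Define $\tilde{\ZSet}(t) := \gamma \cdot \ZSet(t)$. Then $\tilde{\ZSet}(t_0) = \gamma \cdot \Set$, and since $\gamma$ is linear and constant in $t$,
\[
\frac{d\tilde{\ZSet}(t)}{dt} = \gamma \cdot \frac{d\ZSet(t)}{dt} = \gamma \cdot g(\ZSet(t), t) = \gamma \cdot g(\gamma^{-1}\cdot \tilde{\ZSet}(t), t).
\]
Now I would invoke the equivariance of $g$ in the form $g(\gamma^{-1}\cdot \vy, t) = \gamma^{-1}\cdot g(\vy, t)$ (which is just equivariance applied to $\gamma^{-1}$), giving $\gamma \cdot g(\gamma^{-1}\cdot \tilde{\ZSet}(t), t) = g(\tilde{\ZSet}(t), t)$. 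Hence $\tilde{\ZSet}$ solves the same ODE with initial condition $\gamma\cdot\Set$.

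The last step is uniqueness: because $g$ (together with its derivatives) is assumed Lipschitz continuous, as noted in \Secref{sec:background}, the Picard–Lindelöf theorem guarantees that the initial value problem with data $\gamma\cdot\Set$ has a unique solution on $I$, which must therefore coincide with $\tilde{\ZSet}$. Evaluating at $t_1$ yields $f(\gamma\cdot\Set) = \tilde{\ZSet}(t_1) = \gamma\cdot\ZSet(t_1) = \gamma\cdot f(\Set)$, i.e.\ $f$ is equivariant with respect to $\Gamma$. The main obstacle, such as it is, is making sure the regularity hypotheses needed for uniqueness are in place and that the group acts linearly so it can be pulled through the time derivative; for the permutation group both are immediate, so the argument is essentially a one-line consequence of uniqueness once the "$\gamma$ maps solutions to solutions" observation is made. (If one wanted a version for more general, possibly nonlinear, group actions, one would instead differentiate $\gamma\cdot\ZSet(t)$ using the chain rule and require the action to be smooth; I would mention this only as a remark.)
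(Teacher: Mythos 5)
Your proof is correct and follows essentially the same route as the paper: apply $\gamma$ to the trajectory, use linearity of the permutation action and equivariance of $g$ to conclude that the permuted trajectory is the solution started at $\gamma \Set$, and hence $f(\gamma\Set)=\gamma f(\Set)$. The paper compresses this into a one-line chain of equalities using the integral form of the solution and leaves the identification of $\gamma\ZSet(t)$ with the solution from initial condition $\gamma\Set$ implicit, whereas you spell out that identification via Picard--Lindel\"of uniqueness; that added explicitness is the only real difference, and it is a strengthening rather than a deviation.
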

\begin{proof}
    $ \gamma f(\Set) = \gamma f(\ZSet(t_0)) = \gamma \ZSet(t_0) + \gamma \int_{t_0}^{t_1} g(\ZSet(t), t) dt = \gamma \ZSet(t_0) + \int_{t_0}^{t_1} g(\gamma \ZSet(t), t) dt = f(\gamma \ZSet(t_0)) = f(\gamma \Set), \gamma \in \Gamma$.
\end{proof}

Theorem \ref{th:equivariant-ode} shows that the equivariant drift in a continuous normalizing flow gives an invariant density. In our work, we use permutation equivariant functions to get permutation invariant densities.

\section{Dataset preprocessing and generation}

\textbf{Matching performance data.}
In \Secref{sec:experiments} we wanted to see if our exact trace models can match the performance of the original architectures. First part of the experiments included using a synthetic dataset. We generated sets of $n = 10$ numbers, uniformly sampled between $(a, b)$, where $a$ and $b$ are uniformly sampled from $(0, 1)$. Therefore, the random interval from which we get the numbers is always different. The goal is to return the biggest value in the set. The model needs to learn the identity mapping, followed by the max-pooling operation. We adapted the task from \citet{lee2019set}.

The second task is counting the number of unique digits in a set, also from \citet{lee2019set}. Each input contains $n = 10$ images, normalized from MNIST dataset. The count is defined as the number of unique classes in the set. For example, if there are $5$ images that represent digit $0$ and $5$ that represent $1$, then the count is equal to $2$.

Finally, we use labeled point clouds \citep{wu20153d} for classification task. A single input is a set of $512$ point and a corresponding class. There are $40$ unique classes. We center and normalize the data such that each example has zero mean, and unit variance.

MNIST and two classes from point cloud dataset (airplanes and chairs) are used for generation experiments as well. We do the same preprocessing steps.

\textbf{Point process data.}
We generate synthetic datasets on $\BoundedRegion = (0, 1) \times (0, 1)$ region and take care of the edge effects by simulating on a region larger than $\BoundedRegion$, e.g. for Mat\'ern clustering process with $R = 0.1$ we extend the simulation square by $0.1$ on all sides. The points that fall out of the observed region are discarded. Thomas and Mat\'ern are simulated as it is described in the main text. The mixture dataset uses 3 normal distributions with means $(0.3, 0.3)$, $(0.5, 0.7)$, $(0.7, 0.3)$ and a diagonal covariance $0.05$.

Checkins-NY and Checkins-Paris use data of user location logs that are recorded over the period of 2009 and 2010. All locations for a user are a single realization of the underlying process. Initial data contains just under 6.5 million locations.
We take the locations from two cities, New York, between coordinates $(40.6829, -74.0479)$ and $(40.8820, -73.9067)$, and Paris, between $(48.5, 2)$ and $(49, 3)$, respectively. Data contains very large sets, largest one having $904$ elements. \Figref{fig:checkins-data} illustrates these two datasets.

Crimes dataset contains recordings of crimes that happened in the city of Portland from March 01 2012 to December 31 2012, with $309$ days in total. It contains $146927$ logs, each with a time, coordinate and type of case. We aggregate all the occurrences in one day to get a single realization. This gives us very large sets, smallest one having $298$ and largest $736$ elements.

\begin{figure*}
    \centering
    \includegraphics{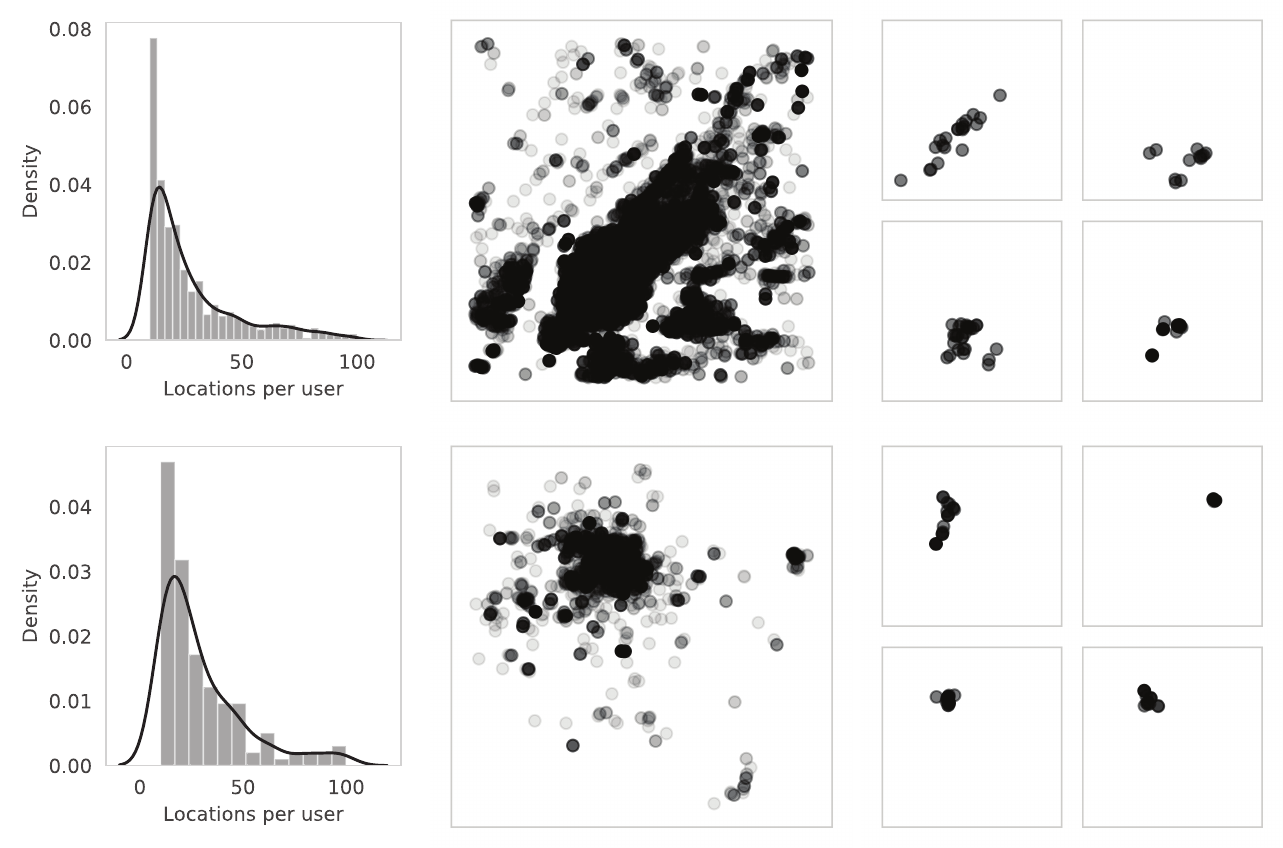}
    \caption{Upper row: Checkins-NY. Bottom: Checkins-Paris. From left to right: distribution of the number of locations per user after
    627 preprocessing; all the locations for all the users; sets of locations for four randomly sampled users.}
    \vspace*{-0.2cm}
    \label{fig:checkins-data}
\end{figure*}

\section{Implementation}

To implement CNFs we extend the library provided by \citet{chen2018neural}\footnote{\url{https://github.com/rtqichen/torchdiffeq}} to allow permutation invariant densities.

\subsection{Normalizing flow layers}

In the following we review the traditional normalizing flows, i.e., different ways to define $f : \R^d \rightarrow \R^d$ with the tractable inverse and the determinant of the Jacobian.
We denote random variables $\vz,\vx \in \R^d$ coming from the base and target density $q$ and $p$, respectively. The forward map $\vz \mapsto \vx$ is parametrized with $f$, and the inverse map $x \mapsto z$ with $f^{-1}$. In all of the models, the analytical inverse exists. In practice we often focus on parametrizing the inverse map, allowing efficient density estimation for maximum likelihood training.

\textbf{Coupling layer}  \citep{dinh2014nice,dinh2016density} defines the transformation of one part of the data point $\vx$ conditioned on the rest. If $\vx \in \R$, we take first $k$ dimensions and copy them, $\vx_{1:k} = \vz_{1:k}$. The rest go through an affine transformation with coefficients as a function of $\vz_{1:k}$:
\begin{align}\label{eq:affine_coupling}
    \vx_{k+1:d} = \vx_{k+1:d} \odot \exp(s(\vx_{1:k})) + t(\vx_{1:k}),
\end{align}
where $s, t : \R^{k} \rightarrow \R^{d-k}$ are unrestricted neural networks. Notice the Jacobian has a special form where the only non-zero elements are on the diagonal and in one block under the diagonal. The determinant is then the product of the diagonal entries. The inverse is obtained by noticing that $\vz_{1:k}$ is equal to $\vx_{1:k}$, and then using this to obtain $\vz_{k+1:d} = (\vz_{k+1:d} - t(\vx_{1:k})) \odot \exp(-s(\vx_{1:k}))$. The computation complexity is the same for both the sampling and the density estimation.

The function described in \Eqref{eq:affine_coupling} is called an affine transformation. Instead, we can use more expressive functions such as rational-quadratic splines. Details of implementation can be found in \citet{durkan2019neural}\footnote{\url{https://github.com/bayesiains/nsf}}. It defines a monotonic function with $K$ bins where each bin is a rational-quadratic function. Increasing $K$ increases the expressiveness. The inverse and determinant are easy to obtain. In the context of coupling layers, one part of the data defines the spline parameters that transform the rest.

\textbf{Autoregressive layer} transforms the ith dimension based on all the previous values $\vx_{:i}$. We can implement this by processing a sequence $(\vz_1, \dots, \vz_d)$ with a neural network, in our case with a recurrent neural network that outputs the parameters of an affine transformation similar to the coupling layer. The Jacobian is now a lower triangular matrix. When calculating the forward direction, we know all the values of $\vz$, and can use efficient parallel implementations making the calculation fast. However, when inverting element $\vx_i$ we need to know all of the previously inverted values $\vz_{:i}$. This means that autoregressive layer is inherently slow and sequential in one direction. Therefore, parametrizing $\vx \mapsto \vz$ direction is preferred for maximum likelihood training \citep{papamakarios2017masked}, whereas for fast sampling, we would parametrize the $\vz \mapsto \vx$ direction \citep{kingma2016improved}.

\textbf{Combining layers} can be implemented as a composition of functions $f_1 \circ \dots \circ f_k$. In practice, for each transformation we know $f$ and $f^{-1}$ together with the log-determinant of the Jacobian. Therefore, for input $\vz_0$ we calculate $\vz_1 = f_1(\vz_0),\dots,\vz_k =f_k(\vz_{k-1})$. We accumulate the log-determinant at every step and calculate $p(\vz_k)$ with the change of variables formula. The same principle applies for the inverse direction.

\textbf{Why don’t we use CNFs in all models?} We do not use the continuous normalizing flows for baseline models for two reasons. First, some models represent existing works in literature that we reimplement here \citep[e.g.,][]{bender2019permutation}. Second, CNFs are not necessarily always better in terms of efficiency and final performance given the task. Both the variational autoencoder and the inhomogeneous process are defining the density on $\BoundedRegion$ without any constraints. Since invariance is implemented via independence, CNFs are here as good as any other normalizing flow method. On the other hand, if we actually want to model interactions, like we do with our model, we need to have a model that supports full Jacobian and CNFs are a perfect candidate.

\subsection{Training in batches}

All of the described functions work with inputs of shape $(n, d)$. We train in mini-batches, i.e. process multiple sets at a time. Since $n$ varies between them, we pad them all to the biggest set with zeros and keep track of the original size. This is important when calculating the loss since we use per-point negative log-likelihood. Also, when implementing interactions between the elements, we want to remove those that represent the masking. For example, in the attention network, the elements interact with each other through the similarity matrix. If most of our inputs are zero padded, this will give unexpected results after calculating softmax. Therefore, we zero-out the elements corresponding to the padding using an infinity-mask. Same is done for other methods. In case we do not do this before the aggregation, the model will learn slower, get worse results, and in extreme cases not converge at all.

\subsection{Hyperparameter search}

In all of the models we tried placing the weight decay of $10^{-4}$ or $10^{-3}$ on the weights. This sometimes leads to more stable training and better performance. Using smaller learning rate ($10^{-4}$) makes training slow without other benefits. We use learning rate scheduler that halves the learning rate every $50$ epochs. In IHP we use a normalizing flow with $L \in \{1, 5\}$ spline coupling layers, each spline defined with $K \in \{5, 10\}$ knots. Autoregressive model stacks $L_a \in \{5, 10\}$ autoregressive and $L_c \in \{5, 10\}$ set coupling layers. When using attention in IWAE, besides the number of layers $L \in \{1, 5\}$, we define the number of heads $H \in \{1, 8\}$. In all of the above, having more layers (bigger model) does not improve the results further.

\subsection{Other models}

\textbf{Counting unique digits.}
We use a two layer convolutional neural network (filter sizes: 32 or 64; kernel size: 3), with dropout between the layers (0.25 or 0.5 remove probability), and a fully connected layer that outputs a fixed 128 vector representing a single image. All images are combined with either deep set or attention architecture, with hidden dimension sizes of 128. The final vector is obtained with max-pooling, and passed to a two layer neural network that outputs the parameter $\lambda$ of Poisson distribution.

\textbf{Point cloud classification.} We use two layer 1-dim. convolutional neural network (filter size: 64; kernel size: 3), with batch normalization, that outputs a feature vector of shape (512, 64). We process this with an ordinary differential equation, parametrized either with a deep set or attention.
We use max-pooling, and a two layer feedforward network that outputs class logits. During training we use data augmentation, randomly rotate the object around the z-axis with maximum rotation of $15^{\circ}$, and scale randomly from 80\% to 125\%, following previous works.

\section{Ablation studies}

\subsection{Comparison to \citet{kohler2020equivariant}}

In addition to the main experiments in \Secref{sec:experiments}, we compare to the model by \citet{kohler2020equivariant} that also uses an equivariant flow with closed-form trace. The results are shown in Table~\ref{tab:koehler-comparison}. Their model is designed for physics simulations so it has to include rotation and translation symmetry. They achieve this by transforming the points based on their distances using a simple Gaussian kernel. Because of this limitation, it is unable to model the inhomogeneous part of the process as can be seen from the results. To combat this we added additional spline coupling layers but the final results are still worse than for our model. We conclude that our equivariant transformation (e.g., \Eqref{eq:deepset}) is better suited for the problem of modeling sets and achieves better performance than all competing methods while having closed-form trace computation.

\begin{table}
    \centering
    \begin{tabular}{lccc}
        & Mixture & Thomas & Checkins-NY \\
        \midrule
        \citeauthor{kohler2020equivariant}  & -1.0511 & -0.0134 & -0.6368 \\
        +IHP                                & -1.5583 & -0.0148 & -0.7244 \\
    \end{tabular}
    \caption{Gaussian kernel results.}
    \label{tab:koehler-comparison}
\end{table}

\subsection{Inference time}

Having a stochastic trace during training is the only way to train the continuous normalizing flow when we do not have the exact trace calculation. However, for inference, we want to have exact likelihood which requires exact trace. Calculating trace scales quadratically with the size of the set and elements' dimension. \Figref{fig:trace_speed} showed that our method is much faster and scales well with big sets. Here, we report the total time to calculate the loss on the held-out test set after training is finished. \Figref{fig:speed_comparison} shows that our approach is often an order of magnitude or more faster than computing the exact trace.

\begin{figure}[t]
    \centering
    \includegraphics{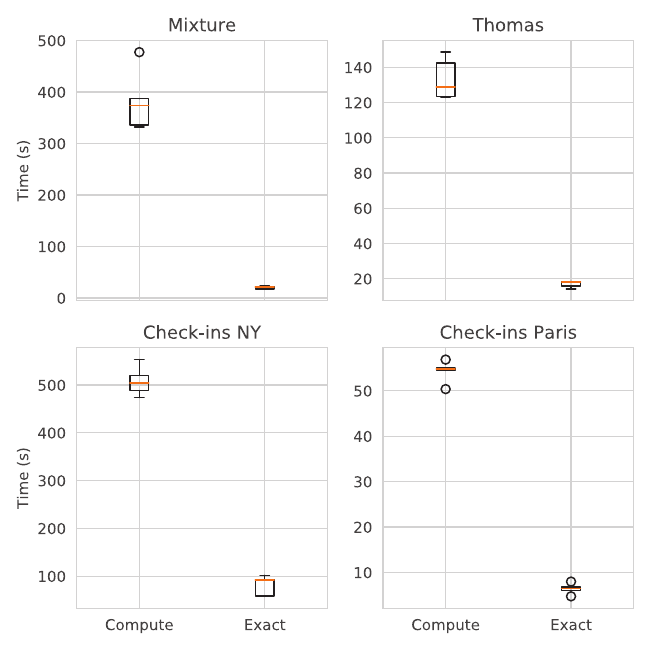}
    \caption{Test time comparison.}
    \label{fig:speed_comparison}
\end{figure}

\begin{figure}[t]
    \centering
    \includegraphics{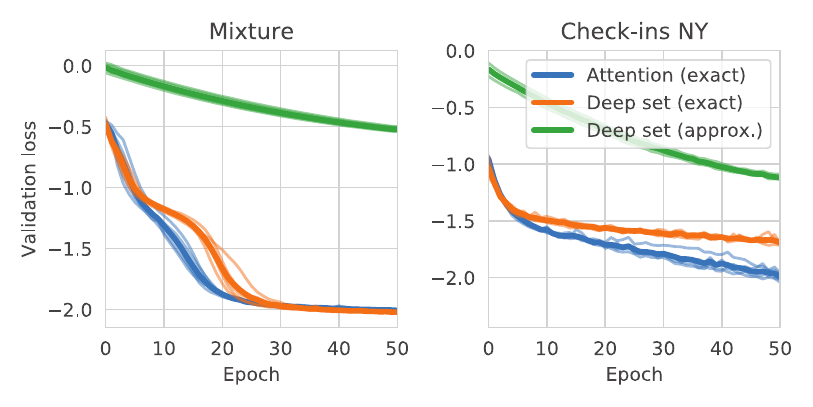}
    \caption{Convergence comparison.}
    \label{fig:convergence}
\end{figure}

\subsection{Convergence}

One of the benefits of our approach is that having an exact trace reduces the variance during training which in turn gives better convergence. We already saw this in \Figref{fig:evaluations_comparison}, for big point cloud sets. Here, in \Figref{fig:convergence}, we show this on two point process datasets. Both deep set and attention layers with exact trace perform better than the one with approximate trace calculation, in the early stages of training.

\subsection{Volume preserving flow}

\begin{table}[b]
    \centering
    \begin{tabular}{lccc}
        & IHP & Zero trace & Zero trace + IHP \\
        \midrule
        Mixture  & \textbf{-2.06} & -0.02 & -2.04 \\
        Mat\'ern & -0.21 & -0.11 & \textbf{-0.45} \\
        Thomas   & -0.01 & -0.15 & \textbf{-0.22} \\
    \end{tabular}
    \caption{Comparison of using a CNF model with and without additional coupling layers.}
    \label{tab:zero_trace_results}
\end{table}

A volume preserving model has only interactions between the points in its drift function so it is unable to model the inhomogeneous part of the process. That is the reason why we add additional coupling layers, same as those in IHP, along with the CNF layer. Here, we compare the two versions of the model, one with 5 of these layers and other without any. We additionally include IHP into comparison. The results in Table~\ref{tab:zero_trace_results} show that we need the inhomogeneous layers to achieve competitive results. They are on par with IHP on a synthetic inhomogeneous dataset. It performs much better on datasets with interactions. A version with no coupling layers has a better test loss score than IHP on Thomas dataset.

\subsection{Induced attention}

Table~\ref{tab:induced_results} shows the comparison between using induced (number of inducing points: $m = 8$) and regular self-attention. We can see that induced attention usually performs worse. We did not tune the hyperparameter $m$ so the results can be different for the optimal value.

\begin{table}
    \centering
    \begin{tabular}{lcc}
        & Induced Attention & Self Attention \\
        \midrule
        Mat\'ern & -0.99 $\pm$ 0.00 & \textbf{-1.03 $\pm$ 0.01}\\
        Mixture  & \textbf{-2.02 $\pm$ 0.03} & -1.92 $\pm$ 0.06 \\
        Thomas   & -0.29 $\pm$ 0.09 & \textbf{-0.55 $\pm$ 0.00} \\
    \end{tabular}
    \caption{Comparison of using induced and regular self-attention.}
    \label{tab:induced_results}
\end{table}

\end{document}